\documentclass[10pt]{amsart}
\UseRawInputEncoding

\usepackage{graphicx}
\usepackage{booktabs}
\usepackage{amsmath}
\usepackage{subcaption}

\usepackage{amsfonts}
\usepackage{amssymb}
\usepackage{amsmath}
\usepackage{amscd}
\usepackage{graphicx}
\usepackage{booktabs}
\usepackage{xcolor}
\usepackage{fancyhdr}
\usepackage{algorithm}
\usepackage{algpseudocode}
\usepackage{float}
\usepackage{mathrsfs}
\usepackage{stmaryrd}

\usepackage[colorlinks,linkcolor=blue,citecolor=blue,urlcolor=blue]{hyperref}

\newcommand{\beq}{\begin{equation}}
\newcommand{\eeq}{\end{equation}}
\newcommand{\bea}{\begin{eqnarray}}
\newcommand{\eea}{\end{eqnarray}}
\newcommand{\nn}{\nonumber}
\newcommand{\noi}{\noindent}
\newcommand{\bos}{\boldsymbol}

\newtheorem{assumption}{Assumption}

\newcommand{\bi}[1]{\textbf{\textit{#1}}}

\newcommand{\bln}{\rm{\bf{ln}}}
\newcommand{\bexp}{\rm{\bf{exp}}}
\newcommand{\blog}{\rm{\bf{log}}}
\newcommand{\bof}{\boldsymbol{f}}
\newcommand{\R}{\mathbb{R}}
\newcommand{\Rp}{\mathbb{R}^{N}_{+}}

\newcommand{\bg}{\boldsymbol{g}}
\newcommand{\bQ}{\boldsymbol{Q}}
\newcommand{\bH}{\boldsymbol{H}}
\newcommand{\bU}{\boldsymbol{U}}

\newcommand{\bu}{\boldsymbol{u}}
\newcommand{\bx}{\boldsymbol{x}}
\newcommand{\by}{\boldsymbol{y}}
\newcommand{\bv}{\boldsymbol{v}}
\newcommand{\bw}{\boldsymbol{w}}

\newcommand{\RR}{\mathbb{R}}

\newtheorem{definition}{Definition}
\newtheorem{proposition}{Proposition}
\newtheorem{theorem}{Theorem}

\newtheorem{lemma}{Lemma}
\theoremstyle{definition}

\newtheorem{remark}{\textbf{Remark}}

\theoremstyle{definition}	 %AÃ±adimos esto para que tenga el estilo de una definiciÃ³n y no la de teorema o proposiciÃ³n. Por defecto el estilo es "plain", que es el de teoremas.

\usepackage{lastpage}
\usepackage{mathtools}
\usepackage{mathrsfs}
\usepackage{enumitem}
\usepackage{stmaryrd}
\usepackage{empheq,etoolbox}
\usepackage{xcolor}

\def\BibTeX{{\rm B\kern-.05em{\sc i\kern-.025em b}\kern-.08em
    T\kern-.1667em\lower.7ex\hbox{E}\kern-.125emX}}
\usepackage[active]{srcltx} %SRC Specials for DVI Searching

\usepackage{bm}
\usepackage{textcomp}
\usepackage{empheq}
\usepackage{booktabs}

\begin{document}

\title[Group Entropies  and Mirror Duality]{Group Entropies  and Mirror Duality:\\
A  Class of Flexible  Mirror Descent Updates \\ for Machine Learning}

\author{Andrzej Cichocki} 
\address{Systems Research Institute of Polish Academy of Science\\
      Newelska 8, 01-447 Warszawa, Poland, \\ Warsaw University of Technology \\ Koszykowa 75, 00-662 Warszawa, Poland}
\email{cichockiand@gmail.com}
   
\author{Piergiulio Tempesta}
\address{Departamento de F\'{\i}sica Te\'{o}rica, Facultad de Ciencias F\'{\i}sicas \\
  Universidad Complutense de Madrid, 28040 -- Madrid, Spain \\ and Instituto de Ciencias
  Matem\'aticas \\ C/ Nicol\'as Cabrera, No 13--15, 28049 Madrid, Spain}
\email{p.tempesta@fis.ucm.es}

\keywords{
group entropies, mirror descent, mirror duality, generalized exponentiated gradient descent algorithms}

%\editor{}

\date{March 8, 2026}

\maketitle

\begin{abstract}
We introduce a comprehensive theoretical and algorithmic framework that bridges formal group theory and group entropies with modern machine learning, paving the way for an infinite, flexible family of Mirror Descent (MD) optimization algorithms. Our approach exploits the rich structure of group entropies, which are generalized entropic functionals governed by group composition laws, encompassing and significantly extending all trace-form entropies such as the Shannon, Tsallis, and Kaniadakis families. By leveraging group-theoretical mirror maps (or link functions) in MD, expressed via multi-parametric generalized logarithms and their inverses (group exponentials), we achieve highly flexible and adaptable MD updates that can be tailored to diverse data geometries and statistical distributions. To this end, we introduce the notion of \textit{mirror duality}, which allows us to seamlessly switch or interchange group-theoretical link functions with their inverses, subject to specific learning rate constraints. By tuning or learning the hyperparameters of the group logarithms enables us to adapt the model to the statistical properties of the training distribution, while simultaneously ensuring desirable convergence characteristics via fine-tuning. This generality not only provides greater flexibility and improved convergence properties, but also opens new perspectives for applications in machine learning and deep learning by expanding the design of regularizers and natural gradient algorithms. We extensively evaluate the validity, robustness, and performance of the proposed updates on large-scale, simplex-constrained quadratic programming problems.

\end{abstract}

\tableofcontents

\section{Introduction}
The purpose of this article is to merge formal group theory with
machine learning techniques with the aim of constructing novel mirror descent algorithms that possess suitable group-theoretical properties, making them flexible tools for optimization problems.

A variety of fundamental gradient-based update schemes have been thoroughly investigated within the domains of optimization, machine learning, and artificial intelligence. These include additive gradient descent (GD) \cite{Nemirowsky,EGSD}, its stochastic variant (SGD), multiplicative updates (MU) \cite{Cia3,Cichocki_Cruces_Amari,CichZd_ICA06}, exponentiated gradient (EG) descent \cite{EG,MD2}, and mirror descent (MD) algorithms \cite{Nemirowsky,MD1,Beck2003,EGSD,shalev2011}.

The additive gradient descent (GD) method and its multiplicative counterpart, the exponentiated gradient (EG) update, are among the most commonly used algorithms in machine learning. We recall that EG descent updates, which are a subclass of both multiplicative gradient schemes \cite{Cia3} and mirror descent (MD) schemes (e.g., \cite{MD1,EGSD}), were introduced by  Kivinen and Warmuth \cite{EG,KW1995}. EG algorithms have since been extended and applied across diverse domains by numerous researchers \cite{Helmbold98,herbsterwarmuth,EGnoise,Nock2023,Yang2022,Momentum,EGSD}.

Mirror descent is a versatile optimization framework, prevalent in optimization, AI, machine learning, and online learning, that generalizes classical gradient descent to non-Euclidean settings by means of a mirror map (or potential function). Rather than using the standard Euclidean norm, mirror descent employs a Bregman divergence, that serves as a proximal regularizer for the optimized loss function.

A crucial aspect is that the geometry of the problem, in the mirror gradient descent approach, is incorporated by the strictly convex potential (or generating) function $F(\bw)$. By choosing flexible potential functions, mirror descent can be adapted to
the specific optimization problem,  in order to ensure faster convergence and/or better robustness. Generally speaking, this makes mirror descent superior to standard gradient descent optimization methods.
Observe that by adopting $F(\bw) =(1/2)||\bw||_2^2$ as a distance-generating function, then the Bregman distance reduces to the Euclidean distance, and mirror descent to standard gradient descent. By using parameterized potential functions,  we can adapt to  different geometries,  such as the simplex or more general convex sets.

Exponentiated Gradient  descent and Mirror Descent   have seen widespread adoption across numerous optimization and machine learning applications, and have also been successfully applied to non-convex problems \cite{Nemirowsky,Beck2003,MD1,MD2,EGSD,shalev2011}.

Although additive gradient descent remains the predominant optimization method, it is often ill-suited for problems requiring all components of the weight vector to remain nonnegative. Furthermore, when employing additive gradient updates, practitioners frequently encounter vanishing or exploding gradients, which require meticulous learning-rate selection and tuning to ensure stable and effective training in real-world applications \cite{Nemirowsky}.

Interestingly enough, EG weight updates can mitigate many of the issues inherent to additive GD, without the need for finely tuned learning rates. In particular, when the target weight vectors are sparse, the un-normalized exponentiated gradient (EGU) and normalized exponentiated gradient variants have been shown to converge more rapidly than standard GD \cite{EG,herbsterwarmuth,MD1}.

%Furthermore, recent neurophysiological studies of synaptic plasticity suggest that multiplicative EG-style update mechanisms more closely mirror biologically observed learning processes than do conventional additive GD updates \cite{Cornford24}.

Conventional exponentiated gradient (EG) updates are most naturally suited to domains whose geometry is governed by the Kullback-Leibler divergence, as is common when optimizing over probability distributions. However, a significant drawback of these classic EG schemes is their rigidity: the absence of adjustable hyperparameters prevents them from adapting to datasets with diverse distributional properties, thereby constraining both their convergence dynamics and practical performance.

The central contribution of this work is a general theoretical framework which establishes a rigorous connection between machine learning and group theory. In fact, in order to take into account the geometry of the problem under consideration, we design exponentiated gradient updates based on multi-parametric link functions grounded in group-theoretic principles.

More precisely, the theory of formal groups \cite{Haze,Serre}, a fundamental branch of algebraic topology, offers a rich source of link functions expressed in terms of generalized logarithms and exponentials. These generalized functions, in turn,  find their primary physical motivation in the theory of \textit{group entropies}.

This approach was first introduced by one of us in \cite{PT2011PRE}, further developed in \cite{tempesta2015,PT2016AOP,tempesta2016,PT2020CHA} and subsequently explored by various authors (see \cite{RRT2019PRSA,TJ2020SR,ADT2021CHA,ADTCHA2022} and \cite{JT2018ENT,JT2024ENT} for reviews). The notion of group entropy represents an axiomatic formulation of the theory of generalized entropies, based on the first three Shannon-Khinchin axioms (continuity of entropy, maximum entropy principle, expansibility) and the new composability axiom formulated in \cite{PT2016AOP}.  These requirements are necessary to ensure that a given entropy can be suitable for information-theoretical and thermodynamic purposes.

From a mathematical point of view, the composability axiom amounts to saying that the entropy $S(A\times B)$ of a system composed of two systems $A$ and $B$ has to satisfy a group law. For instance, the standard Boltzmann-Gibbs entropy satisfies the additivity law; the Tsallis entropy fulfils the so-called multiplicative law $S(A\times B)= S(A) + S(B)+ (1-q)S(A)S(B)$, and so on.

In this work, we present two main results.

First, we shall prove that the class of known mirror descent algorithms can be broadly generalized through the theory of group entropies. In fact, this theory naturally yields \textit{infinitely many multiparametric
generalized logarithms}, that we will interpret as \textit{group-theoretical link functions}. We also propose a novel construction of group logarithms from known ones, and introduce the notion of \textit{chain link functions}. They represent a novel and versatile tool which adapts to the features of the optimization problem under consideration. A key property is that, in an appropriate limit, they reduce to the standard natural logarithm.
This enhancement guarantees more flexibility to adapt to data within different geometries and probability distributions.

\vspace{2mm}

The second main result is a contribution to the theory of MD algorithms for ML, based on a crucial duality of our group-theoretical approach.

In order to clarify this concept, we recall that the fundamental requirement for the MD updates is that the generating (potential) function $F(\bw)$ be a twice-differentiable, strictly convex function on a convex domain $\mathcal{C}\in \R^{N}$. This in turn requires that the link function $\bof(\bw):=\nabla F(\bw)$ should be a strictly increasing and invertible (componentwise) function.
Moreover, when the link function is concave (as in the case of group logarithms) the growth rate of the associated Bregman divergence is reduced, leading to a decrease in geometric curvature. This property improves the stability of the resulting algorithms, albeit potentially at the expense of a slower convergence rate. Conversely, selecting a convex link function (such as group exponential functions) induces a geometric increase in the growth rate of the Bregman divergence, corresponding to higher geometric curvature, and may therefore enable faster convergence.

Thus, assume that the link function is expressed by a concave group logarithm. Then, we have the following invariance property.

\vspace{2mm}

\textbf{Mirror Duality}: The MD updates can be formulated by selecting as a link function either a group logarithm or the associated group exponential. All formulas stay valid under this interchange under imposing some constraints on learning rates. The resulting theories will be said to be mirror-dual.

\vspace{2mm}

Needless to say, this symmetry is intrinsic to the theory and amounts to the interchange of the link function with its inverse. However, it acquires a novel geometrical meaning in the context of group-theoretical logarithms and exponentials, since it also implies a ``geometry interchange '' among dual theories. We believe that this duality could be crucial in many applicative contexts.

Using the mirror duality, we propose a novel class of algorithms, which we call the Dual Mirror Descent (DMD). We study numerically the behaviour of these algorithms in a relevant and concrete  Simplex constrained Optimization problems. We will show that DMD algorithms outperforms the standard EG and the generalized EG algorithms in several of the scenarios we tested.

In essence, we propose a general group-theoretical foundation for mirror descent analysis, which admits potential applications across different research domains, including statistical mechanics and portfolio optimization.

This article is organized as follows. In Section \ref{Sec2}, we review basic notions for the subsequent discussion, as formal groups, the Shannon-Khinchin axiomatic formulation and the theory of group entropies. In Section \ref{Sec3}, we introduce the machine learning algorithms object of our analysis, and specify our notation. In Section \ref{Sec4}, we propose a  family of mirror descent algorithms based on group theory.   Section \ref{Sec5} is devoted to computer simulation experiments of our novel algorithms, whereas Section \ref{Sec6} provides  a theoretical analysis of the main stability,  convergence and robustness properties of Dual Mirror Descent algorithms. In the final Section \ref{Sec7}, some conclusions are drawn and future research perspectives are outlined.

\section{Preliminaries: Groups and Entropies.} \label{Sec2}

\subsection{Group Entropies: A Brief Review}

In order to make the subsequent discussion self-contained, first we shall briefly summarize the main results of the theory of group entropies. The motivation is to provide an axiomatic formulation
of the theory of generalized entropies, that would allow us to connect them to information theory and statistical mechanics.

We recall that the Shannon-Khinchin axioms were proposed independently by Shannon and Khinchin as properties characterizing uniquely the mathematical form of Boltzmann's entropy as a function
$S(p_1,\ldots,p_W)$ in the space of probability distributions. They can be stated as follows.

(SK1) (Continuity). The function $S(p_1,\ldots,p_W)$ is continuous with respect to all its arguments

(SK2) (Maximum principle).  The function $S(p_1,\ldots,p_W)$ takes its maximum value for the uniform distribution $p_i=1/W$, $i=1,\ldots,W$.

(SK3) (Expansibility). Adding an impossible event to a probability distribution does not change its entropy: $S(p_1,\ldots,p_W,0)=S(p_1,\ldots,p_W)$.

(SK4) (Additivity). Given two subsystems $A$, $B$ of a statistical system,
\[
S(A \times B)=S(A)+S(B\mid A).
\]
It is clear that, by relaxing one of these axioms, new possibilities arise. The first three axioms are very natural properties, which are crucial both in information theory and statistical mechanics.
 Let $\mathcal{P}_W$ denote the set of all probability distributions of the form $p=(p_1,\ldots,p_W)$, $p_i\geq 0$, $\sum_i p_i =1$.
\begin{definition}
A function $S: \mathcal{P}_W \to \mathbb{R}_{\geq0}$ that satisfies the axioms (SK1)--(SK3) will be said to be an entropic function (or a generalized entropy).
\end{definition}

Instead, the additivity axiom (SK4) has been replaced in \cite{PT2016AOP}, \cite{tempesta2016} by a more general requirement, called the \textit{composability axiom}.
Precisely, given two \textit{statistically independent} systems $A$ and $B$, we require that the entropy satisfies the composition law
\begin{equation}
S(A\times B)= \Phi(S(A), S(B)),
\end{equation}
where $\Phi$ should fulfill some properties in order to have a physical meaning:
\begin{enumerate}
\item \label{C} $\Phi(x,y)=\Phi(y,x)$ ({\it commutativity}),
\item  $\Phi(x,0)=x$ ({\it identity)},
\item  \label{A} $\Phi(x,\Phi(y,z))=\Phi(\Phi(x,y),z)$ ({\it associativity)}.
\end{enumerate}
 These properties are necessary to ensure that a given entropy may be suitable for information-theoretical and
thermodynamic purposes. Indeed, it should be symmetric with respect
to the exchange of the labels $A$ and $B$ (i.e., commutativity). Furthermore, if we compose a given
system with another system in a state with zero entropy, the total entropy should
coincide with that of the given system (i.e., identity). Finally, the composability of more than two
independent systems in an associative way is crucial to ensure path independence in the composition.

If the composability axiom is satisfied in full generality, namely for all possible distributions defined in the space $\mathcal{P}_W$, the corresponding entropy will be said to be
\emph{strictly composable}.
If it is satisfied on the uniform distribution only, the entropy is \textit{weakly composable}.
\begin{definition}
A group entropy is an entropy that satisfies the first three Shannon-Khinchin axioms and is strictly composable.
\end{definition}
In \cite{ET2017JSTAT} it has been proved that the trace-form class of entropies \textit{cannot} be strictly composable, with the notable exception of Tsallis entropy. Therefore, non-trace-form entropies are needed to satisfy the composability axiom. For this reason, in \cite{tempesta2016} the class of Z-entropies has been proposed:
\begin{equation}  \label{eq:Z}
Z_{G,\alpha }(p):=\frac{1}{1-\alpha }\log _{G}\Big(\sum_{i=1}^{W}p_{i}^{%
\alpha }\Big).
\end{equation}
Here $\log_{G}(x)$ denotes a deformation of the Neperian logarithm, known as the \textit{group logarithm} \cite{PT2011PRE}, which will be defined below.
The entropies of the class \eqref{eq:Z} are group entropies which, in addition, satisfy the fundamental set of Shore-Johnson axioms \cite{SJ1,SJ2}, recently studied and characterized in \cite{JK2019}.
The notion of group logarithm, and its inverse, the \textit{group exponential} $\exp_{G}(x)$ will be central objects of our investigation.

\subsection{Formal Groups: Main Definitions}
First, we shall review some results concerning formal group theory and its connection to the theory of generalized entropies, closely following  refs. \cite{PT2016AOP} and \cite{tempesta2016}.

%\subsection*{}
Since Bochner's seminal work  \cite{Bochner1946}, the theory of formal groups has found numerous applications, ranging from the original algebraic-topological context to the theory of elliptic curves, arithmetic number theory, and combinatorics. We begin by recalling some basic facts and definitions of the theory of formal groups (see \cite{Haze}, \cite{Serre}  for a thorough exposition).

Let $R$ be a commutative associative ring  with identity, and $R\left\{ x_{1},\text{ }%
x_{2},..\right\} $ be the ring of formal power series in the variables $x_{1}$, $x_{2}$,
... with coefficients in $R$.

\begin{definition} \label{formalgrouplaw}
A commutative one--dimensional formal group law
over $R$ is a formal power series $\Phi \left( x,y\right) \in R\left\{
x,y\right\} $ such that \cite{Bochner1946}
\begin{equation*} \label{compositionlaw}
1)\qquad \Phi \left( x,0\right) =\Phi \left( 0,x\right) =x,
\end{equation*}%
\begin{equation*}
2)\qquad \Phi \left( \Phi \left( x,y\right) ,z\right) =\Phi \left( x,\Phi
\left( y,z\right) \right) \text{.}
\end{equation*}
When $\Phi \left( x,y\right) =\Phi \left( y,x\right) $, the formal group law is
said to be commutative.
\end{definition}
The existence of an inverse formal series $\tilde{\varphi}
\left( x\right) $ $\in R\left\{ x\right\} $ such that $\Phi \left( x,\tilde{\varphi}
\left( x\right) \right) =0$ is a consequence of Definition \ref{formalgrouplaw}. Let  $B=\mathbb{Z}[b_{1},b_{2},...]$ be the  ring of integral polynomials in infinitely many variables. We shall consider the series $
H\left( s\right) = \sum_{i=0}^{\infty} b_i \frac{s^{i+1}}{i+1}$,
with $b_0=1$. Let $G\left( t\right)$ be its compositional inverse:
\beq \label{I.2}
G\left( t\right) =\sum_{k=0}^{\infty} a_k \frac{t^{k+1}}{k+1},
\eeq
 i.e. $H\left( G\left( t\right) \right) =t$. Thus, we shall use the notation $H=G^{-1}$. From this property, we deduce $a_{0}=1, a_{1}=-b_1, a_2= \frac{3}{2} b_1^2 -b_2,\ldots$.
%\begin{definition}
The Lazard formal group law \cite{Haze} is defined by the formal power series
\[
\Phi_{\mathcal{L}} \left(x, y\right) =G\left( G^{-1}\left(
x\right) +G^{-1}\left(y\right) \right).
\]
%\end{definition}
The coefficients of the power series $G\left( G^{-1}\left( x\right) +G^{-1}\left(y\right) \right)$ lie in the ring $B \otimes \mathbb{Q}$ and generate over $\mathbb{Z}$ a subring $A \subset B \otimes \mathbb{Q}$, called the Lazard ring $L$.

For any commutative one-dimensional formal group law over any ring $R$, there exists a unique homomorphism $L\to R$ under which the Lazard group law is mapped into the given group law (the \textit{universal property} of the Lazard group).

Let $R$ be a ring with no torsion. Then, for any commutative one-dimensional formal group law $\Phi(x,y)$ over $R$, there exists a series $\psi(x)\in R[[x]] \otimes \mathbb{Q}$ such that
\[
\phi(x)= x+ O(x^2), \quad \text{and} \quad \Phi(x,y)= \phi^{-1}\left(\phi(x)+\phi(y)\right)\in R[[x,y]]\otimes \mathbb{Q}.
\]
The universal formal group plays the role of the general composition law admissible for the construction of the family of group entropies.

We also recall that a second composition law, compatible with the law \eqref{compositionlaw} was proposed in \cite{CT2019prep}, allowing the definition of \textit{formal rings}.

\subsection{Generalized Logarithms and Exponentials from Group Laws}
There is a simple construction allowing to define a generalized logarithm from a given group law.
\begin{definition} \label{defglog}
Let $G$ be a series of the form \eqref{I.2}. A group logarithm is a continuous, strictly concave, monotonically increasing function $\log_{G}: (0,\infty)\to \mathbb{R}$, possibly depending on a set of real parameters, such that $\log_{G}(\cdot)$ solves the functional equation for the group law corresponding to $G$, i.e
\beq
\log_{G}(xy)= \Phi(\log_{G}(x),\log_{G}(y)), \label{glog}
\eeq
where
\beq
\Phi(x,y)=G(G^{-1}(x)+G^{-1}(y)). \label{GPsi}
\eeq
\end{definition}

%\begin{theorem}
\begin{remark}
A simple way to realize a group logarithm is the following: one can introduce
\begin{equation} \label{Glog}
\log_{G}(x):=G(\ln (x)) \ ,
\end{equation}
where $G\in C^{1}(\mathbb{R}_{\geq 0})$ is a strictly increasing function, with  $G(t)=t+O(t^2)$,   taking positive values over $\mathbb{R}_{+}$. Then $G(\ln (x))$  is at least a Schur-concave function and it satisfies  Eq. \eqref{glog}, where $\Phi(x,y)$ is the group law \eqref{GPsi}.
\end{remark}

The theory, in an abstract sense, can also be formulated in a field of characteristic zero in the setting of formal power series. As is well known \cite{Haze}, for a one-dimensional formal group law $\Phi(x,y)$ over a  torsion-free ring, there exists a formal series $G(t)$ of the form \eqref{I.2} that satisfies Eq.. \eqref{GPsi}. %Then the same relations \eqref{proofTh1} still hold.

\noi As an example, when $\Phi(x,y)=x+y$, we immediately obtain that $G(t)=t$ and $\log_{G}(x)=\ln (x)$. If instead
$\Phi(x,y)=x+y+(1-q)xy$, the  associated function $G(t)$ is given by $G(t)=\frac{e^{(1-q)t}-1}{1-q}$ and the group logarithm reduced to the  Tsallis $q$-logarithm
\beq
\log_q(x):=\frac{x^{1-q}-1}{1-q},  \;\; x>0, \;\; q>0, \;\; q \neq 1. \label{Tslog}
\eeq
with the inverse function $\exp_q(x)= [1+(1-q)x]^{1/(1-q)}_+$.
\begin{remark}
If $G(t)$ is given by the general series \eqref{I.2}, the concavity of $\log_{G}(x)$ is guaranteed by the condition
\beq
 a_{k} >    (k+1)  a_{k+1} \qquad \forall k\in\mathbb{N} \qquad \text{with } \{a_{k}\}_{k\in\mathbb{N}}\geq 0 \label{conc},
\eeq
which is also sufficient to ensure that the series $G(t)$ is absolutely and uniformly convergent with radius of convergence $r=\infty$.
\end{remark}

Thus, under mild assumptions, given a group law,  a group logarithm can be obtained from relation \eqref{Glog} together with the condition \eqref{conc}.  A construction of group logarithms from difference operators has been proposed in \cite{PT2011PRE}.

Note that the theory outlined above allows to consider group logarithms with an arbitrary number of hyper-parameters (as we shall show in Section 4).

\begin{definition} \label{defgexp}
The inverse of a group logarithm is said to be the group exponential; it is defined by
\beq
\exp_{G}(x)= e^{G^{-1}(x)}. \label{Gexp}
\eeq
\end{definition}
When $G(t)=t$, we have back the standard exponential; when $G(t)=\frac{e^{(1-q)t}-1}{1-q}$, we recover the $q$-exponential $e_{q}(x)=\left[1+(1-q)t\right]^{\frac{1}{1-q}}$, and so on.
\begin{remark}
The Lagrange inversion principle allows to compute the formal compositional inverse as the series %$G^{-1}(s)$, such that $G(G^{-1}(s))=s$ and G^{-1}(G(t))=t$ can be obtained by means of %the Lagrange inversion theorem. We get the formal power series
\beq \nn
G^{-1}(x)=x-\frac{a_1}{2}x^2+ \frac{1}{6}(3 a_1^2-2 a_2)x^3+\frac{1}{24}(-15 a_1^2+20 a_1 a_2-6 a_3)x^4 \ldots
\eeq
A priori, when a closed-form expression is not available, we can compute the group exponentials at least as the formal power series inverse of $\log_{G}(x)$.
%Then, from the relation $\Phi(x,y)= G\left(G^{-1}(x)+G^{-1}(y)\right)$, with $\Phi(x,y)=x+y+ \text{higher order terms}$, we get a system of equations that allows to reconstruct the sequence $\{a_{k}\}_{k\in\mathbb{N}}$.

%Each element $a_{k}$  a priori depends on the set of parameters appearing in $\Psi(x,y)$.
\end{remark}

%\subsection{Fundamental Properties of Group Logarithms and Exponentials}

The group logarithms have following fundamental properties (which follows from SK axioms)

  \begin{itemize}

 \item

Strictly monotonically increasing function: $\displaystyle \frac{d \log_G(x)}{dx} >0$ \;\; (for a wide range of parameters, this property is essential and necessary for MD) \\

 \item

 Concavity:  $\displaystyle \frac{d^2 \log_{G}(x)}{dx^2} < 0$ \;\; (for a limited range of parameters).

 \end{itemize}

 Analogously, group-exponential functions have the following basic properties

 \begin{itemize}

% \item

% Domain $\exp{G}(x) $:  $\Real \rightarrow \Real^+$\\

% \item

% Scaling and Normalization: $\exp{G}(0)=1$, \;\;  $\displaystyle \frac{d \exp_{G}(x)}{dx}\Bigg|_{x=0} =1$\\

 \item

Strictly monotonically increasing function: $\displaystyle \frac{d \exp_G(x)}{dx} >0$ \;\; (for a wide range of parameters) \\

 \item

Convexity:  $\displaystyle \frac{d^2 \exp_{G}(x)}{dx^2} > 0$ \;\; (for a limited range of parameters)\\

% \item

%  $\exp_{G}(x) \exp_G(-x)=1$.

 \end{itemize}

\subsection{Group Logarithms: Examples}
To compare the present general theoretical approach with the existing literature, we observe that in fact, the known cases can be easily accommodated within the group entropy theory. From the general definition $Log_{G}(x):=G(\ln (x))$, we obtain the following classification.

\vspace{3mm}

a) The standard Neperian logarithm is recovered when $G(t)=t$.

b) The Tsallis logarithm \cite{tsallis1988} is obtained for $G(t)= \frac{e^{(1-q)t}-1}{(1-q)}$.

c) The Kaniadakis logarithm \cite{kaniadakis2002,kaniadakiseditorial2004,kaniadakis2005,
kaniadakis2009,kaniadakis2017} corresponds to the choice
$G(t)= \frac{\text{exp}(\kappa t)-\text{exp}(-\kappa t)}{2 \kappa}$

d) The Euler generalized logarithm
\beq
\log_{Euler}(x):=\frac{x^{a}-x^{b}}{a-b}.
\eeq
recently studied in \cite{Cichocki2025a,Cichocki2025c} is related with the Abel formal group. It can be interpreted as a group logarithm obtained via the function
\beq
\nn
\text{G}_{Abel}(t)=\frac{e^{at}-e^{bt}}{a-b}=\frac{e^{\gamma t}}{\sqrt{\delta}}\sinh \left(\sqrt{\delta} t\right), \label{Abelexp}
\eeq
where
$
\gamma= (a+b)/2$, $\delta=(a-b)^2/4.$
This function satisfies the celebrated Abel functional equation, as proved by Abel itself.
%The $Z_{a,b}$-entropy is related to the Borges-Roditi logarithm:

e) The family of generalized logarithms introduced in \cite{tempesta2015} and recently studied in \cite{Cichocki2025b} is generated by the expansion
\beq
\nn G(t)=t-\sigma\frac{a^2 \varphi''(a)+a \varphi'(a)-1}{2(-1+a\varphi'(a))}t^2 +
\sigma^2\frac{a^3 \varphi'''(a)+3 a^2 \varphi''(a)+a \varphi'(a)-1}{6(-1+a\varphi'(a))} t^3+\ldots
\eeq
Notice that $G(t)$ is invertible, and its inverse $G^{-1}(t)$ can be computed term by term with the desired precision by means of the Lagrange inversion formula. In some specific cases, it converts into a closed-form function.

The construction in \cite{tempesta2015} allows to introduce an infinite family of generalized logarithms. We first introduce the notion of admissible function.
\begin{definition}\label{admis}
Let $\varphi:[0,1)\to\mathbb{R}$ be a $\mathcal{C}^{\infty}(0,1)$ function, continuous in $[0,1)$, such that
\beq
\frac{1-a \varphi'(a x^{\sigma})-a^2  x^{\sigma} \frac{\sigma}{1+\sigma}  \varphi''(a x^{\sigma}) }{1-a \varphi'(a)}> 0 \label{ineq}
\eeq
for $x\in[0,1]$, $a\in (0,l_a)$, $\sigma\in (0,l_{\sigma})$, with $l_a,l_{\sigma}\in(0,1)$. Then $\varphi$ will be said to be an admissible function.
\end{definition}
We shall denote by $\mathcal{A}$ the set of admissible functions.
Then we can define an infinite family of group logarithms of the form
\beq \label{eq:templog}
\log_{\varphi}(x):=\frac{1}{1-a \varphi'(a)}\left(\frac{\varphi(ax^{-\sigma})-x^{-\sigma}}{\sigma}+\frac{1-\varphi(a)}{\sigma}\right)
\eeq
where $\varphi\in \mathcal{A}$.
We can illustrate this formulation of group logarithms by a simple example: Assuming that
\beq \nn
\varphi(x) = x^r \left[ (\lambda x)^{\kappa} - (\lambda x)^{-\kappa} \right] +x
= \lambda^{\kappa} x^{r+\kappa} -\lambda^{-\kappa} x^{r-\kappa} +x
\eeq
and taking $\alpha=1$ and $\sigma=-1$, we obtain the three parameters  generalized logarithm investigated by Kaniadakis  \cite{kaniadakis2009}
\beq
\displaystyle \log_{\kappa,r,\lambda} (x) = \frac{\lambda^{\kappa} x^{r+\kappa} - \lambda^{-\kappa} x^{r-\kappa} - \lambda^{\kappa} + \lambda^{-\kappa} }{(r+\kappa) \lambda^{\kappa} - (r-\kappa) \lambda^{-\kappa}}, \;  x>0, \;\; \lambda >0, \kappa \in [-1,1],  \; -|\kappa| < r < |\kappa|.
\eeq
Many other  examples of group logarithms and related entropies of this class are provided in \cite{tempesta2015} and \cite{Cichocki2025b}. It is interesting to mention the simple group logarithms obtained by replacing into Eq. \eqref{eq:templog} the admissible functions $\varphi(x)=\exp (x)$ or $\varphi(x)= \sin (x)$, and the notable logarithms obtained from Dirichlet series of number theory.

\vspace{3mm}

\subsection{The Super--Exponential Case} As alternative and nontrivial examples, we propose the family of super-exponential entropies, which are designed to be extensive when $W(N)$ grows faster than exponentially. The first known example is defined by
	\begin{equation}
	S[p]=\lambda\Bigg\{\exp\left[\mathcal{L}\Big(\frac{\ln \sum_{i=1}^{W(N)}p_i^\alpha}{\gamma(1-\alpha)}\Big)\Bigg]-1\right\}, \label{superexpEnt}
	\end{equation}
where $p=(p_1,\ldots,p_W)\in \mathcal{P}_W$ and $\mathcal{L}$ denotes the principal branch of the Lambert function \footnote{The principal branch is defined as the unique solution of the functional relation $\mathcal{L}(x)e^{\mathcal{L}(x)}=x$ for $x\geq 0$.}. This super-exponential entropy  was recently introduced in relation to a simple model whose components can form emergent paired states in addition to the combination of single-particle states \cite{JPPT2018JPA}. The group-theoretical structure is as follows. From the function
\beq
G(t)=(1-\alpha)\Bigg\{\exp\Big[\frac{\mathcal{L}(t)}{\gamma(1-\alpha)}\Big]-1\Bigg\},
\eeq
the associated group logarithm reads:
\beq
\log_G(x):=(1-\alpha)\Bigg\{\exp\Big[\frac{\mathcal{L}(\ln (x))}{\gamma(1-\alpha)}\Big]-1\Bigg\},
\eeq
with $\alpha>0$, $\gamma\geq 1$.

\newpage

%Another interesting case is provided by the \textit{stretched-exponential entropy}
%\begin{equation}
%S_{\alpha, \gamma}[p[ = \ {k} \left[ \frac{\ln \big( \sum_{i=1}^{W} p^{\alpha}_i\big)}{1-\alpha}\right]^{\!1/\gamma} \, .
%\end{equation}
%This entropy has been introduced by one of us in \cite{PT2020CHA} and recently applied, in \cite{TJPLB2024}, to the analysis of cosmological data.
%We get
%\beq
%\log_{\alpha, \gamma}(x)= (1-\alpha)\Big(\frac{\ln x}{1-\alpha}\Big)^{\frac{1}{\gamma}},
%\qquad \exp_{\alpha, \gamma}(x)=\exp\Big((1-\alpha)\Big(\frac{x}{1-\alpha}\Big)^{\gamma}\Big),
%\eeq
%with  $\alpha<1$. Furthermore, for $0<x<1$ we assume $\gamma<1$; for $x>1$, we take $\gamma>1$ in order to ensure concavity of $\log_{G}(x)$ (convexity of $\exp_{G}(x)$).

\subsection {On the Relevance of Group Entropies in Machine Learning and Information Theory}
% \tcb{(Revised subsection)}
Group entropies are a broader class of entropies constructed from formal group laws, satisfying the first three Shannon-Khinchin axioms and a composability axiom. Group entropies provide a systematic, axiomatic, and highly flexible framework for generating new entropies, with significant advantages for potential applications in  machine learning and deep learning applications, especially in robust optimization and adaptive algorithms.
Since they also satisfy the Shore-Johnson axioms (unlike trace-form entropies, except for Tsallis entropy), they represent at the same time proper information measures and are potential candidates for thermodynamic applications.

Group entropies provide us with a large class of group logarithms and group exponentials, which are the building blocks for our construction of mirror descent algorithms.
%  All trace-form entropies can be considered  as special cases, but also allows for more general, non-trace-form functionals.
In summary, the main features of our construction are the following.

\begin{itemize}

%\item Trace-form entropies which have been mostly applied till now in machine learning are only a subset of group entropies.

   \item  {\it Systematic Generation}: The group-theoretical approach allows for the systematic and iterative construction of new entropies by combining or composing existing group laws.

\item {\it Multi-Parametric Flexibility}: Group entropies can depend on several hyperparameters, which can be tuned or learned to adapt to the statistical properties of data or the geometry of the optimization problem.

\item {\it Composability}: The composability axiom ensures that the entropy is well-behaved under the combination of independent systems, which is crucial for information theory and thermodynamics.

%\item {\it Unification}: Trace-form entropies (Shannon, Tsallis, Kaniadakis, Sharma-Mittal, etc.) are all special cases of group entropies, but group entropies can go beyond trace-form (e.g., super-exponential, Z-entropies).

\item {\it Adaptability in Machine Learning}: The flexibility in choosing the group law and parameters allows for the design of loss functions and regularizers that are robust to outliers, adapt to data geometry, and potentially improve convergence in optimization algorithms (e.g., mirror descent, exponentiated gradient updates).

\item {\it Interpretability}: Some group entropies may lack a clear physical or information-theoretic interpretation, especially when generated by highly nontrivial group laws.

\item {\it Parameter Selection}: The increased flexibility comes with the challenge of selecting or learning optimal hyperparameters, which may require additional computational resources (e.g., grid search, Bayesian optimization).

\item The main trade-off is increased mathematical and computational complexity, but this is often outweighed by the gains in flexibility, adaptability, and performance in complex learning tasks.

    \end{itemize}

 \section{Related work on Mirror Descent}\label{Sec3}

% Mirror Descent and Standard Exponentiated Gradient (EG) Update} \label{Sec3}

 \subsection{Notations and Assumptions} In this paper, we shall adopt the mathematical notations typically used in ML and optimization theory contexts.

 Let $\bw=(w_1,\ldots,w_N)^{T}$ be a column vector  where $w_i\in \mathbb{R}$, $\forall i=1,\dots, N$. The vector space of column vectors will be identified with $\mathbb{R}^{N}$; we shall refer to its elements simply as vectors. Let $[x]_+ := \max\{0,x\}$. We denote by $\mathbb{R}^N_+$ the set of vectors with nonnegative entries. Also, we use the notation $\bw^{\alpha}: = [w_1^{\alpha}, \ldots, w_N^{\alpha}]^T$. We denote the scalar product of two (column) vectors $\bv, \bw\in \mathbb{R}^N$  by  $\bv^{T}\bw=v_1 w_1+\cdots v_N w_N$, whereas their Hadamard product is defined as $\bv \odot \bw = [ v_1w_1, \ldots, v_Nw_N ]^T$ .

 %All operation for vectors like multiplications and additions are performed componentwise.
%Let $g:\mathbb{R}\to \mathbb{R}$.  With a slight abuse of notation, we introduce the symbol $g(\bw): = [g(w_1),g(w_2),\ldots, g(w_N)]^T$.

 We shall  denote by $\bw (t)$ the \textit{weight} or \textit{parameter vector} as a function of time $t$.  The gradient\footnote{Whenever there is no ambiguity, we shall use the notation $\nabla$ instead of $\nabla_{\bi w}$} of a differentiable cost function is defined as
 \[
  \nabla_{\bi w} L(\bw) = \partial L(\bw)/\partial \bw: = [\partial L(\bw)/\partial w_1, \ldots, \partial L(\bw)/\partial w_N]^T .
  \]

 %In contrast to generalized logarithms defined later the classical  natural logarithm will be denoted by $\ln (x)$.\\

\begin{assumption}
The learning process is supposed to advance in iterative steps. During step $t$,  we take the weight vector $\bw (t) = \bw_t$ and update it to a new vector $\bw(t+1) = \bw_{t+1}$.
\end{assumption}

\subsection{The Bregman Divergence} Let $F: \mathcal{C}\to \mathbb{R}$ be a twice-differentiable, strictly convex function  defined on a convex domain $\mathcal{C}\subseteq \mathbb{R}^{N}$.  The Bregman divergence $D_f:\mathbb{R}^{N}\times \mathbb{R}^{N} \to \mathbb{R}$ is defined to be the function \cite{Bregman1967}
\beq
D_f(\bu || \bw) = F(\bu) - F(\bw) -  (\bu-\bw)^{T} \bof(\bw), \qquad \bu, \bw\in \mathcal{C}.
\label{Bregman1}
\eeq
Here $F$ is the \textit{potential function}, while $\bof: \mathbb{R}^{N}\to \mathbb{R}^{N}$, defined by $\bos{f}(\bw):= \nabla_{\bi w} F(\bw)$, is referred to as the \textit{link function} or \textit{mirror map}.
%
%The Bregman divergence can be understood as the first-order Taylor expansion of $F$ around  $\bw_1$  evaluated at $\bw_2$.

The Bregman divergence $D_f(\bu || \bw )$  arising from the potential function $F(\bw)$ can be viewed as a measure of curvature. This family of divergences includes many well-known divergences commonly used in applicative contexts, e.g., the squared Euclidean distance, the Kullback-Leibler divergence (relative entropy), the Itakura-Saito distance, the beta divergence and many more \cite{Cia3}.

The gradients of  the Bregman divergence $D_f(\bw || \bw_t)$  w.r.t. the first and second arguments  yield
\beq
\nabla_{\bw} D_f(\bw || \bw_t)= \bof(\bw) - \bof(\bw_t), \qquad \nabla_{\bw_t} D_f(\bw || \bw_t)= -  \bH_f(\bw_t) (\bw - \bw_t),
\label{Bregman2}
\eeq
where
$
\bH_f(\bw_t) =\nabla^2_{\bi w_t} F(\bw_t)$ %= \frac{\partial^2 F(\bw_t)}{\partial \bw^2_t}= \frac{\partial f(\bw_t)}{\partial \bw_t},
 is the Hessian of $F(\bw_t)$ evaluated at $\bw_t$.  In the subsequent analysis,  by properly selecting the range of hyper-parameters present in the link functions, we shall assume that the Hessian matrices will be diagonal, positive-definite  matrices with  positive entries.

\subsection{The Optimization Problem}
Our central objective is to study the following constrained optimization problem:
\beq
	\bw_{t+1} = \rm{arg}~\rm{min}_{\bi w \in \Rp } \left\{ L(\bi{w} )+ \frac{1}{\eta} D_f(\bi{w} || \bi{w}_t)  \right\} \;\; \text{s.t.} \;\; \bi{w}_t\in \Rp,\; \forall t , \;\; ||\bi{w}||_1=\sum_{i=1}^{N} w_{i}=1,
	\label{Eq-1a}
\eeq
where  $L(\bw)$ is a differentiable loss function,  $\eta > 0$ is the learning rate and
$D_f(\bw || \bw_t)$ is the Bregman divergence \cite{Bregman1967}. Our aim is to solve it in a very general way, in a group-theoretical framework.

\subsection{Mirror Descent }
We shall analyze now some general properties of Mirror Descent algorithms, useful in the subsequent analysis.
\begin{remark}
From now on, we shall always assume that the potential function $F$ has a \textit{separated form}: $ F(\bw)=\sum_{i=1}^{N} h(w_i)$.
Consequently, we shall adopt the following notation for the link function:
\beq
\bos{f}(\bw)= \big[f(w_1),\ldots, f(w_n)\big]^{T},
%\log_{G} (\bw):=(\log_{G} (w_1), \ldots, \log_{G} (w_n))^{T}, \qquad \exp_{G} (\bw):=(\exp_{G} (w_1), \ldots, \exp_{G} (w_n))^{T}
\eeq
where $f:\R \to \R$  given by $f(w_i):= \frac{d h}{d w_i}$ is assumed to be an invertible function.
In particular, we introduce the \textit{vector logarithm}, defined as $\bln$$(\bw):=[\ln(w_1),\ldots, \ln(w_N)]^T$ and the \textit{vector exponential}, given by $\bexp$$(\bw):=[\exp(w_1),\ldots, \exp(w_N)]^T$. %In other words,  we associate with a function $f:\mathbb{R}\to \R$ a new vector-valued function, that we denote with the same symbol $f$, with an abuse of notation.
\end{remark}

Computing the gradient of the r.h.s. of Eq. (\ref{Eq-1a}) and setting it to zero  at the minimum $\bw_{t+1}$ yields the so-called prox or implicit MD update
\beq \label{MDimplicit}
	\bof(\bw_{t+1}) = \bof(\bw_t) - \eta \nabla_{\bi w} L(\bw_{t+1}).
	\eeq
% or, equivalently,
%\be
%	\bw_{t+1} = f^{(-1)} \left[ f(\bw_t) - \eta \nabla_{\bi w} L(\bw_{t%+1})\right],

%\ee

Under the assumption $\nabla_{\bi w} L(\bw_{t+1}) \approx \nabla_{\bi w} L(\bw_{t})$, the implicit iteration in Eq. \eqref{MDimplicit} can be approximated by the explicit update. In this way,  we obtain for the normalized (scaled) MD the  form \cite{MD1,shalev2011}
 \begin{empheq}%[box=\fbox]
 {align}
\bw_{t+1} &= \bof^{(-1)} \left[ \bof(\bw_t) - \eta \nabla_{\bi w} L(\bw_t)\right] \label{eq:27} \\
\bw_{t+1} &\leftarrow \bw_{t+1} /||\bw_{t+1}||_1,
		\label{f-1fDT}
\end{empheq}
 where $\bof^{(-1)}(\bw):= \big[f^{-1}(w_1),\ldots,f^{-1}(w_n)\big]^{T}$ is inverse of the link function $\bof(\bw)$ with respect to the composition operation acting component-wise.
%

%Projected gradient descent is a special case of mirror descent using the mirror map $F(\bw) = ||\bw||^2  $.

The differential equation for the continuous-time mirror descent update (CMD), or mirror flow,  as $\Delta t \rightarrow 0$ is given by \cite{MD1,EGSD}
\begin{align} \label{f-1fCT}
	\frac{d\,\bof\!\left(\bw(t)\right)}{d t}= - \nabla_{\bi w} L(\bw(t)).
\end{align}
%
%where
%$\mu = \eta/\Delta t >0$ is the learning rate for the continuous-time learning, and
%$\bof(\bw) = \nabla_{\bi w} F(\bw)$ is a suitably chosen link function \cite{MD1,EGSD}
An alternative CMD update can be written in the form
%Hence, we can obtain an alternative continuous-time MD update in general form:
\beq \nn
 \frac{d\,\bw}{d t} =   -  \; [\nabla_{\bi w}^2 F(\bw)]^{-1} \; \nabla_{\bi w} L(\bw).
\eeq
%

%\end{remark}
\noi Using the  chain rule, the mirror flow can be expressed as
 \beq \nn
\frac{d\,\bof\!\left(\bw \right)}{d t} =
%\frac{d\,f(\bw)}{d \bw} \odot  \frac{d\,\bw}{d t} = \diag\left\{\frac{d\,f(\bw)}{d \bw}\right\} \frac{d\,\bw}{d t}=
- \nabla_{\bi w} L(\bw(t)).
%= \diag\left\{\frac{d\,f(\bw)}{d \bw}\right\} \frac{d\,\bw}{d t}
%=  -\mu \nabla_{\bi w} L(\bw_t)).
\label{chainrule}
\eeq
Thus, an alternative continuous-time MD update, commonly known as mirrorless MD (MMD), can be expressed in the following general form:
\beq
 \frac{d\,\bw}{d t} =   - \rm{diag} \left\{\left(\frac{d\,\bi f(\bi w)}{d \bi w}\right)^{-1}\right\} \nabla_{\bi w} L(\bi w_t)= - \; [\nabla^2 F(\bi w)]^{-1} \; \nabla_{\bi w} L(\bi w(t)) .
\eeq
In discrete time, the normalized MMD update (scaled to have unit $\ell_1$ norm) is given by:
\begin{align}
\bw_{t+1} =& \left[\bw_t  -\eta~ \rm{diag} \left\{\left(
\frac{d\,\bi f(\bi w_t)}{d \bi w_t}\right)^{-1}\right\} \nabla_{\bi w} L(\bi w_t)\right]_+ \label{eq:33}\\ 
\bw_{t+1} &\leftarrow \bw_{t+1} /||\bw_{t+1}||_1 ,
		\label{diagMD}
\end{align}
where
\[ \displaystyle \rm{diag} \left\{ \left(\frac{d\,\bi f(\bi w)}{d \bi w}\right)^{-1}\right\} := \rm{diag} \left\{ \left(\frac{d\,\bi f(\bi w)}{d w_1} \right)^{-1}, \ldots, \left(\frac{d\,\bi f(\bi w)}{d w_N}\right)^{-1} \right\}.
\]
%\be
%	\bw_{t+1} = \bw_t  -\eta \diag \left\{\left(
%\frac{d\,f(\bw)}{d \bw}\right)^{-1}\right\} \nabla_{\bi w} L(\bw_t),
%	\ee
Note that the above-defined diagonal matrix can be regarded as the inverse of the Hessian matrix, having positive diagonal entries for a particular set of parameters.

Mirrorless Mirror Descent (MMD) offers a primal-only, potential-free alternative to Mirror Descent (MD) by interpreting it as a partial discretization of Riemannian gradient flow. This formulation allows MD to be extended to any Riemannian geometry, even when the metric tensor is not a Hessian.

"Primal-only" and  "Potential-free"  indicate that MMD doesn't rely on a dual representation of the problem: this allows us to avoid the computational difficulties emerging if the inverse link function  is impossible to express in closed form or our approximation is not sufficiently accurate.

\subsection{The Natural Gradient Descent (NGD)}
The MMD can be regarded as a particular instance of the Natural gradient descent (NGD).

The NGD is an optimization technique grounded in ideas from  information geometry, and works well as an alternative to  stochastic gradient descent (SGD) across many applications \cite{Amari-98}.

The natural gradient leverages the Riemannian structure of the parameter space to refine the gradient search direction. In contrast to Newton's method, natural gradient updates do not rely on a locally-quadratic cost function.  Instead, they extend the conventional gradient by taking into account the curvature of both the loss function and the considered regularization function.

A suitable strictly concave link  function should be chosen to take full advantage of the underlying geometry of the problem. The mirror descent update step involves a coordinate transformation which is tied to the information-geometric dual parameters. In the continuous-time limit, mirror descent can be represented as a Riemannian gradient flow with respect to the Hessian metric induced by the given Bregman divergence.

\subsection {Basic Mirror Descent Updates: Standard Exponentiated  Gradient (EG)}

In the special case in which $ F(\bw) = \sum_{i=1}^N (w_i \ln w_i - w_i)$, with the corresponding  link function $\bof(\bw) = \bln (\bi w)$,  we obtain the (multiplicative) unnormalized Exponentiated Gradient (EGU) updates  \cite{EG} %.
\beq
		\frac{d\,\bln \bi w(t)}{d t}= -  \nabla_{\bi w} L(\bw(t)), \quad \bw(t) \in \Rp, \;\;\; \forall \, t .
		\label{MDEG1}
	\eeq
In this sense, the unnormalized exponentiated gradient  update  (EGU) corresponds to the discrete-time version of the continuous ODE:
\beq
	\bw_{t+1}
	= \bexp \left( \bln (\bi w_t) - \eta_t\, \nabla_{\bi w} L(\bi w_t) \right) \nonumber
	= \bi w_t \odot \bexp \left( - \eta_t \nabla_{\bi w} L(\bi w_t) \right), \;\; \bi w_t \in \Rp, \;\; \forall t
\label{EGU}
\eeq
where   $\odot$  and $\exp$ are component-wise multiplication and component-wise exponentiation respectively and $\eta_t  >0$ is the \textit{learning rate} for discrete-time updates.\\

In many practical applications, for example, in the Markowitz portfolio optimization, we  need to impose, as an additional constraint, that the weights are not only nonnegative, i.e.,  $w_i \geq 0$ for $i=1,2,\ldots,N$, but also normalized to unit $\ell_1$-norm, i.e.,  $||\bw||_1=\sum_{i=1}^{N} w_i =1$ in each iteration step. In such case,  standard EG update can be derived by minimizing the following optimization problem \cite{EG,KW1995,Helmbold98}:
\beq
	J(\bw_{t+1}) =  \hat{L}(\bw_{t})  + \frac{1}{\eta} D_{KL}(\bw_{t+1} \| \bw_{t}) + \lambda \left(\sum_{i=1}^{N} w_{i,t+1}-1\right),
\eeq
where $\lambda >0$ is the Lagrange multiplier and the last term forces the normalization of the weight vector. The saddle point of this function leads to the standard EG algorithm, expressed in scalar form as \cite{EG,Helmbold98}
\beq \label{eq:38}
w_{i,t+1} = w_{i,t} \; \frac{\exp [- \eta \nabla_{w_{i,t}} L(\bw_{t})]}{\sum_{j=1}^N w_{j,t} \exp [- \eta \nabla_{w_{j,t}} L(\bw_{t})]}, \;\; w_{i,t}>0, \quad \sum_{i=1}^{N}w_{i,t} =1, \;\; \forall i,t.
\eeq
Alternatively, we can implement the normalized EG update as follows:
\bea \label{EG}
&& \bw_{t+1}  =  \bexp \left( \bln (\bi w_t) - \eta_t\, \nabla_{\bi w} L(\bi w_t) \right)  \\
 && \hspace{10mm}	= \bw_t \odot \bexp \left( - \eta \nabla_{\bi w} L(\bi w_t) \right), \;\; \bi w_t \in \mathbb{R}^{N}_{+}, \;\; \forall t, \quad \text{(Multiplicative update)} \nonumber \\
&& \bw_{t+1}  \leftarrow 	\bw_{t+1} / ||\bw_{t+1}||_1, \qquad \qquad \text{(Unit simplex projection)}.
\eea

There exists a broad spectrum of admissible mirror maps
$F(\bw)$ or equivalently link functions $\bof(\bw)$, which can be tailored to the underlying geometry of various optimization problems and adapted to distribution of training data. Using mirror descent with a suitably selected link function can yield substantial enhancements in both performance and robustness with respect to noise and outliers.

The principal novelty of our approach lies in the generation of parameterized link functions $f(\bw)$ in a highly flexible manner, leveraging the theory of group entropies. In particular, we will use group logarithms as the fundamental building blocks of our construction.

\section{Group Logarithms and Mirror Duality: A New Infinite family of Mirror Descent  Algorithms} \label{Sec4}

%In this section, we present two step iterations and  two alternative variants of normalized GEG updates.  In one  variant in each iteration the unnormalized  solution $\tilde{\bw}_{t+1}$ is  scaled (normalized) after each iteration step as $\bw_{t+1} = \tilde{\bw}_{t+1}/||\tilde{\bw}_{t+1}||_1$. Alternative variant is simple projection of the vector $\bw_{t+1}$ onto $\ell_1$-norm unit simplex $\tilde{\bw}_{t+1}$ \cite{Cichocki2024}.
In this section, we shall elucidate the intimate relationship between formal group theory and ML by introducing a novel, infinite class of MD algorithms.

\subsection{Group-exponential link functions and GEG updates}
We begin our analysis with the following
\begin{definition}
We introduce the group-logarithm vector  $$\rm{\bf{log}}_G(\bi w):= [\log_{G}(w_1),\ldots, \log_{G}(w_N)]^T$$ and the group-exponential vector $$\rm{\bf{exp}}_{G}(\bi w):=[\exp_{G}(w_1), \ldots, \exp_{G}(w_N)]^{T}.$$
%Here $\log_{G}(\cdot)$ is a group logarithm and $\exp_{G}(\cdot)$ is its inverse function.
\end{definition}
We can formulate now one of the main notions of our theory.
\begin{definition}
The function $\bof_{G}(\bw):= \rm{\bf{log}}_G(\bi w)$ is said to be the group-theoretical link function.
\end{definition}

%\subsection{New Algorithms}
By using the general MD formulas \eqref{eq:27} and \eqref{f-1fDT}, and the fundamental properties described above, we obtain a wide class of
Generalized Exponentiated Gradient (GEG) updates:
%
%\begin{empheq}{align}
%\left\{
%\beq \label{eq:41}	\begin{array}{l}
\bea \label{eq:41}
&& \bw_{t+1} = \bexp_{G} \left[\blog_G(\bi w_t) - \eta_t \nabla_{\bi w} \widehat{L}(\bi w_t)\right] \\ \nn
&& \hspace{10mm} = \bw_t \otimes_G \bexp_{G} \left(-\eta_t \nabla_{\bi w} \widehat{L}(\bi w_t)\right),  \\
  && \bw_{t+1} \leftarrow \frac{\bw_{t+1}}{||\bw_{t+1}||_1}, \;\; \bw_t \in \mathbb{R}^{N}_{+}, \; \forall t
\;\; \text{(Unit simplex projection)},
 %\bw_{t+1} = \; \text {projection on the  unit}\;\; \ell_1-\text{norm simplex} \;(\tilde{\bw}_{t+1}),
% \end{array}
 \eea
where the generalized  multiplication is defined componentwise for two vectors $\bx$ and $\by$ as follows
\beq
 \bx \; \otimes_G \; \bexp_{G} (\bi y) = \bexp_{G} \left(\blog_{G} (\bi x) + \bi y\right), \;\; \bi x\in \Rp. \nonumber
\eeq
The centred gradient is computed as
\beq
	 \nabla_{\bi w} \widehat{L}(\bw_t)=
		\displaystyle
	  \nabla_{\bi w} L(\bw_t)- (\bw_t^T \nabla_{\bi w} L(\bw_t))\, {\bf 1}= \nabla_{\bi w} L(\bw_t)-  \left(\sum_{i=1}^{N} w_{i,t} \frac{\partial L(\bw_t)}{\partial w_{i,t}}\right) {\bf 1}	
	  \label{Grad-Lparc}
\eeq
(see \cite{Cichocki2024} for details).
As we mentioned in the Introduction, MD updates require the generating (potential) function $F(\bw)$ to be twice differentiable and strictly convex on a convex domain $\mathcal{C}\in \R^{N}$. Consequently, the link function $\bof(\bw):=\nabla F(\bw)$ must be componentwise strictly increasing and invertible.

A concave link function, such as those induced by group logarithms, reduces the growth of the associated Bregman divergence and lowers geometric curvature, improving algorithmic stability but potentially slowing convergence. Conversely, convex link functions, including group exponentials, increase curvature and may yield faster convergence..

\vspace{2mm}

Our theory has the advantage of exhibiting a fundamental symmetry, namely the \textit{Mirror Duality} defined above.
In this section, we shall propose MD new algorithms based on this idea.

\vspace{4mm}

%\vspace{2mm}

We believe that Mirror Duality could be crucial in many applicative contexts.

For instance, once applied to online portfolio selection, our generalized EG framework dynamically responds to changing market conditions and investor risk profiles. This, a priori, could lead to improved robustness and convergence and superior empirical returns compared to standard EG methods \cite{Helmbold98,Momentum}.

As a particular instance of our construction, we recover the extensions of MMD/MD updates that have been recently proposed, based on the one-parametric Tsallis or Kaniadakis  logarithm, the Euler generalized two-parametric logarithm \cite{Cichocki2025a} and the family of logarithms studied in \cite{Cichocki2025b,Cichocki2024}.

\subsection{Dual Mirror Descent Algorithms}

In this section, we propose an alternative algorithm to our  GEG update, designed to improve convergence properties and to naturally promote
 sparser solution weight vectors. We refer to this method as \textit{Dual Mirror Descent} (DMD), which can be formulated explicitly as follows:
\beq \label{eq:43}
%\\
\qquad  {w}_{i,t+1} = \left\{
\displaystyle	\begin{array}{l} \left[ \log_G \left( \exp_G (w_{it}) - \eta \, \nabla_{w_{it}} \widehat{L}(\bw_t) \right) \right]_+  \;\; \text{if}\;\;
\exp_G (w_{it}) - \eta \, \nabla_{w_{it}} L(\bw_t)>0 \\
\\
\exp_G \left( \log_G (\bw_{it}) - \eta \, \nabla_{w_{it}} \widehat{L}(\bw_t) \right) \;\; \text{otherwise},
	\end{array}	
	\right.
\eeq
\beq
 w_{i,t+1} \leftarrow  \frac{|{w}_{i,t+1}|}{||{\bw}_{t+1}||_1}, \;\; \bw_t \in \mathbb{R}^{N}_{+}, \; \forall i,t.
 %\bw_{t+1} = \; \text {projection on the  unit}\;\; \ell_1-\text{norm simplex} \;(\tilde{\bw}_{t+1}).
 \eeq

The DMD update have two branches:

\begin{enumerate}
\item
The Dual Update branch corresponds to the "Dual Mirror Descent" scenario, in which the link function is $f = \exp_G$, a convex mapping that increases geometric curvature and accelerates convergence.

\item

The Primal Fallback branch corresponds to the "Generalized Exponentiated Gradient" (GEG) scenario, where  the link function  $f = \log_G$ is concave, making it robust and well-defined even when gradients are large.

\end{enumerate}
\begin{remark}
Note that switching between DMD and GEG occurs only for large positive values of the gradient $\nabla L(\bw_t)$, whereas for relatively small gradient values the algorithm employs only the DMD branch. Consequently, branch switching is rare and occurs only during the first few iterations.
\end{remark}

Our simulations  indicate  that  the  DMD  algorithm is superior for sparse  and very sparse optimization problems, as the clipping operator $[\cdot]_+$ acts similarly to an  ReLU, efficiently selecting nonzero variables.   In other words,    the  ReLU-like clipping operator  $[\cdot]_+$ acts as a hard thresholding mechanism. If the gradient drives a weight $w_i$ towards zero such that
\beq
z_{i,t} =\exp_G(w_{it}) - \eta \nabla_{w_it} (\bw) \le 1,
\eeq
the operator  instantly sets that weight to $0$.  This ensures both non-negativity and sparsity. Moreover, it effectively filters out additive noise below a specific energy threshold, enabling exact support recovery (Intersection over Union = 1.0) even in low-SNR regimes.

Both DMD and GMD algorithms, through generalized logarithms and exponential functions,  act as  powerful local preconditioners. Small weights (which are common in sparse problems) receive "boosted" gradients, enabling the algorithms to move aggressively even when optimization problem is ill-conditioned. Furthermore,  they function as denoising filters by  effectively truncating the additive noise by rapidly driving the corresponding weights to zero. With appropriately optimized hyperparameters,  they serve  as robust sparsifiers that ignore the "buzz" of ill-conditioning and noise.

\vspace{5mm}

%Remarks:
\begin{remark}
It is worth noting that, using the MMD formulas  \eqref{eq:33} and \eqref{diagMD}, we obtain the corresponding additive gradient descent updates:

\begin{itemize}

\item
By selecting concave group logarithms $\bof(\bw) = \blog_G(\bi w)$ as the link functions, we obtain the following additive gradient update
\begin{align}
\bw_{t+1} &= \left[\bw_t  - \eta_t \, \rm{diag} \left\{\left(
\frac{d\,\blog_G(\bi w_t)}{d \,  \bi w_t}\right)^{-1}\right\} \nabla_{\bi w} \widehat{L}(\bi w_t)\right]_+,\\ 
\bw_{t+1} &\leftarrow  \frac{\bw_{t+1}}{||\bw_{t+1}||_1}, \quad \quad  \bw_t \in \mathbb{R}_+^N,\; \forall t,
		\label{diagMDL}
\end{align}
where $$ \displaystyle \rm{diag} \left\{ \left(\frac{d\,\blog_G(\bi w)}{d \bi w}\right)^{-1}\right\} := \rm{diag} \left\{ \left(\frac{d\,\blog_G(\bi w)}{d w_1} \right)^{-1}, \ldots, \left(\frac{d\,\blog_G(\bi w)}{d w_N}\right)^{-1} \right\} $$ is a diagonal positive definite matrix, which corresponds to GEG updates.

\item
%By choosing the convex  group exponential $\bof(\bw) = \bexp(\bw)$ as link function, we obtain the following additive gradient update
An analogous expression holds for the proposed  Dual MMD update by choosing the link function $\bof(\bw) =\bexp(\bi w)$:
%\begin{empheq}%[box=\fbox]
\begin{align}
\bw_{t+1} &= \left[\bi w_t  - \eta_t \,\rm{diag} \left\{\left(
\frac{d\,\bexp_G(\bi w_t)}{d \, \bi w_t}\right)^{-1}\right\} \nabla_{\bi w} \widehat{L}(\bi w_t)\right]_+, \\ 
\bw_{t+1} &\leftarrow  \frac{\bw_{t+1}}{||\bw_{t+1}||_1}, \quad \quad  \bw_t \in \mathbb{R}_+^N,\; \forall t,
		\label{diagDMMD}
\end{align}
which roughly corresponds to our DMD update.

\end{itemize}
\end{remark}
It is important  to note that the update derived above constitutes a particular instance of  natural gradient descent w.r.t. the Riemannian metric $\bH_F(\bw_t)$, as introduced in its general formulation  by Amari \cite{Amari-98}.

\subsection {MD Updates for Chain Group Logarithms }

Inspired by the theory of group entropies we propose here a novel form of MD which employ not just a single group-theoretical link function $\bof_{G}(\bw)=\blog_{G}(\bi w)$ and its inverse $\bof_{G}^{(-1)}(\bw) =\bexp_G(\bi w)$ but rather a composed link function arising from a \textit{chain} of group logarithms and exponentials \footnote{We recall that in the following analysis, the composition law $\circ$ of vector group logarithms and exponentials is defined componentwise. For instance:
\beq
\blog_{G_1}\circ \bexp_{G_2}(\bi w):= \big[\log_{G_1}\circ\exp_{G_2}(w_1),\ldots, \log_{G_1}\circ\exp_{G_2}(w_n)\big]^{T}.
\eeq}.

% $f_1,f_2, \ldots, f_K$  and their inverses  $f_1^{(-1)},f_2^{(-1)}, \ldots, f_K^{(-1)}$ (each   controlled by on or two hyperparameters).

%Let us consider a group logarithm defined as a chain composition of a set of assigned group logarithms and exponential.
%

\begin{definition}
A chain link function is the composed function defined by
\beq \hspace{5mm}
\bof_{G_1,G_2,\ldots,G_{2n}}(\bw) := (\rm{\bf{log}}_{G_1} \circ \bexp_{G_2}   \circ \cdots \circ \blog_{G_{2n-1}} \circ \bexp_{G_{2n}}  \circ \bln) (\bi w).
\eeq
The group logarithms and exponentials are defined in such a way that $\bof_{G_1,G_2,\ldots,G_{2n}}$ is concave.
\end{definition}
%where $\log_{G_i}$ and $\exp_{G_{i+1}}$ are a set of group logarithms and exponentials ensuring that  $f_{G_1,G_2,\ldots,G_{2n}}$ is concave.

More explicitly, we have component-wise:
\beq \nn
\bof_{G_1,G_2,\ldots,G_{2n}}(\bw)=
\big[\ldots, \log_{G_1} (\exp_{G_2}  ( \cdots (\log_{G_{2n-1}} (\exp_{G_{2n}}(\ln (\bw))))\cdots),\ldots\big]^{T}.
\eeq
By using relations \eqref{Glog} and \eqref{Gexp}, it is easy to prove the following
\begin{lemma}
The relations
\beq
\bof_{G_1,G_2,\ldots,G_{2n}}(\bi w)=\chi(\bln \bi w)= \blog_{\chi}(\bi w)
\eeq
hold, where $\chi=G_1\circ G_2^{-1}\cdots\circ G_{2n-1}\circ G_{2n}^{-1}$.
\end{lemma}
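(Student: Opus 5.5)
The plan is to argue componentwise, since both sides of the claimed identity are defined entrywise. It therefore suffices to fix a scalar $w>0$ and show
\[
\log_{G_1}(\exp_{G_2}(\cdots\log_{G_{2n-1}}(\exp_{G_{2n}}(\ln w))\cdots)) = \chi(\ln w),
\]
with $\chi=G_1\circ G_2^{-1}\circ\cdots\circ G_{2n-1}\circ G_{2n}^{-1}$. The second equality, $\chi(\bln \bw)=\blog_{\chi}(\bw)$, is then simply the realization \eqref{Glog} of a group logarithm, applied with $G$ replaced by $\chi$.

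The key observation is that each adjacent pair $\log_{G_{2k-1}}\circ\exp_{G_{2k}}$ collapses. By \eqref{Gexp}, $\exp_{G_{2k}}(x)=e^{G_{2k}^{-1}(x)}$, and by \eqref{Glog}, $\log_{G_{2k-1}}(y)=G_{2k-1}(\ln y)$. Hence
\[
\log_{G_{2k-1}}(\exp_{G_{2k}}(x)) = G_{2k-1}\big(\ln e^{G_{2k}^{-1}(x)}\big) = (G_{2k-1}\circ G_{2k}^{-1})(x).
\]
This holds for every real $x$ in the domain of $G_{2k}^{-1}$.

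With this identity, I would proceed by induction on $n$. For $n=1$, the left side equals $(G_1\circ G_2^{-1})(\ln w)$, which is the claim. For the inductive step, the innermost $2n-2$ factors together with $\bln$ are grouped and the hypothesis is applied to them; the outer pair is then replaced by $G_1\circ G_2^{-1}$ using the identity above. Associativity of composition finishes the argument.

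The only genuine obstacle is domain bookkeeping. One must check that each intermediate value lies in the domain of the next map: $G_{2k}^{-1}$ must be defined at the current argument, and $\ln$ is applied only to positive numbers, which always holds here since the exponentials are positive. I would handle this by assuming, as the paper implicitly does for the group logarithms and exponentials in the chain, that each $G_i$ is a strictly increasing bijection on the relevant range, so every $G_i^{-1}$ is globally defined. Under that assumption the computation is purely formal. If desired, I would add a remark that $\chi(t)=t+O(t^2)$, since each factor has this form, so $\chi$ is itself an admissible generator in \eqref{Glog}.
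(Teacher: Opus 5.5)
Your proposal is correct and follows the paper's own route. The paper just says the lemma follows from \eqref{Glog} and \eqref{Gexp}, which is exactly your pairwise collapse $\log_{G_{2k-1}}\circ\exp_{G_{2k}}=G_{2k-1}\circ G_{2k}^{-1}$ followed by composing the pairs; your induction and domain remarks only make explicit what the paper leaves implicit.
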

This simple result shows that $\bof_{G_1,G_2,\ldots,G_{2n}}$ is a new group logarithm. For this reason, we shall also refer to $\bof_{G_1,G_2,\ldots,G_{2n}}$ as a \textit{chain group logarithm}. Its inverse function, the \textit{chain group exponential}, reads
\bea \nn
\bof^{(-1)}_{G_1,G_2,\ldots,G_{2n}}(\bw)&=& (\bexp\circ \blog_{G_{2n}}  \circ \bexp_{G_{2n-1}} \circ \cdots \circ \blog_{G2} \circ \bexp_{G_1}) (\bi w)\\
\eea
%or, in components,
%\bea \nn
%f^{(-1)}_{G_1,G_2,\ldots,G_{2n}}(\bw)= \big[\ldots,\exp(\log_{G_{2n}}(\exp_{G_{en-1}}( \cdots \log_{G_2} (\exp_{G_1}(w_i))\ldots))),\cdots]^{T}. \\
%\eea
Hence, the chain MD update can be expressed explicitly in the following form
%
%\be
\beq	\begin{array}{l}
 \bw_{t+1}% = \exp(\log_{G_{2n}}(\exp_{G_{en-1}}( \cdots \log_{G_2} (\exp_{G_1} (\bw_t) - \eta \nabla_{\bi w} L(\bw_t)))))\cdots) = \\
 = \bexp_{\chi}\big(\blog_{\chi}(\bi w_t) - \eta \nabla_{\bi w} L(\bi w_t)\big) \\
 \bw_{t+1} \leftarrow \bw_{t+1} /||\bw_{t+1}||_1 .
		\label{CMD}
\end{array}
 \eeq
%\ee

\begin{remark}
Notice that the formal group law associated with a chain logarithm does not coincide with that of any of its constituents, except for trivial cases. Instead, the chain construction defines a new group law given by $\Phi(x,y)= \chi(\chi^{-1}(x)+\chi^{-1}(y))$.
\end{remark}

\subsection{New Chain link Functions from Group Theory }
% \tcb{(This is a rewritten subsection)}
As an illustrative example, by applying the previous ideas we shall construct a chain link function which combines the Tsallis and Kaniadakis group logarithms and exponentials. Precisely, we have
\begin{eqnarray*}
\hspace{8mm} \log_q(x)&=&\frac{x^{1-q}-1}{1-q}, \qquad \exp_q(x)= [1+(1-q) x]_+^{1/(1-q)}, \qquad q>0 \\  \hspace{8mm}
\log_{\kappa}(x)&=& \frac{x^{\kappa}- x^{-\kappa}}{2 \kappa}, \qquad  \exp_{\kappa}(x)= \left(\sqrt{1+\kappa^2 x^2} + \kappa x\right)^{1/\kappa}, \quad -1\leq \kappa \leq 1.
\end{eqnarray*}
Thus, we define the  link function %\footnote{For simplicity, with an abuse of notation, here we prefer to write $\log_q(\bw):=[\log_q(w_1),\ldots,\log_q(w_N)]^{T}$ and the same for $exp_k(\bw)$, etc.}
\beq \label{lnexpk}
\hspace{15mm} \bof_{q,\kappa}(\bi w) := \blog_{q} \left(\bexp_{\kappa}(\bln\,\bi w)\right), \qquad
\bexp_q(\bi w) =  \left(\kappa \bi w + \sqrt{1+\kappa^2 \bi w^2} \right)^{{1}/{\kappa}}.
\eeq
which is componentwise concave for $q>0, k>0$. Its inverse function reads
\beq \nn
\displaystyle  \bof^{(-1)}_{q,\kappa}(\bw) = \bexp\Big(\blog_{k} \big(\bexp_{q}(\bi w)\big)\Big).
\eeq
Hence, the MD update can be formulated in a  simple and elegant form:
\beq
\bw_{t+1} = \bexp \left(\blog_{\kappa} \big(\bexp_{q}\big[\blog_{q} (\bexp_{\kappa}(\bln(\bi w_t))-  \eta_t \nabla_{\bi w} \widehat{L}(\bi w_t)\big]\big)\right).
\eeq
In the limit $q\to1$, the Tsallis logarithm converges to the standard natural logarithm, and each component of the link function and its inverse take the following forms, respectively:
\beq \nn
\displaystyle f_{1,\kappa} (w_i) =\ln \big(\exp_{\kappa} (\ln w_i) \big) =
\frac{1}{\kappa}\text{arcsinh} (\kappa \ln w_i ), \qquad i=1,\ldots,N
\eeq
\beq \nn
\hspace{10mm} f_{1,\kappa}^{(-1)} (w_i)=\exp \left(\log_{\kappa} (\exp (w_i))\right) = \exp\Big(\frac{\sinh(\kappa w_i)}{\kappa}\Big), \quad i=1,\ldots,N.
\eeq
In this case, the MD update reduces to the exponentiated MD update developed first in \cite{EGSD} and investigated   for several ML applications:
\beq
\displaystyle \bw_{t+1}  = \bexp\Big(\frac{1}{\kappa}\textbf{sinh} \Big[ \textbf{arcsinh} (\kappa \,\bln \,\bi w_t)  -\kappa \eta \nabla_{\bi w} L(\bi w_t) \Big]\Big)
%= \bw_t \odot \textbf{cosh} \left(-\eta \nabla L(\bw_t)\right) + \frac{1}{\kappa} \sqrt{1+\kappa^2 \bw^2_t} \odot \textbf{sinh} \left( - \eta \nabla_{\bi w} L(\bw_t) \right)
.
\eeq
%The analogous formulas for the MSA are left to the reader.

\vspace{2mm}

This simple example illustrates that generating a multi-parameter group logarithm is now straightforward. For instance, it is sufficient to select $\log_{q_1}$, $\log_{q_2}$, $\log_{k_1}$, $\log_{k_2}$ along with their corresponding group exponentials, and then apply the procedure described above to construct a four-parameter instance. Infinitely many additional examples can be obtained by combining different pairs of group logarithms and exponentials.

\section{Computer Simulation Experiments}\label{Sec5}

%\section{Computer Simulation Experiments}\label{sec:experiments}

We evaluate the proposed algorithms---Dual Mirror Descent (DMD),
Generalized Exponentiated Gradient (GEG), and the baseline Standard
Exponentiated Gradient (EG)---on large-scale Simplex-Constrained
Quadratic Programming (SCQP) problems.  The experiments are designed to
assess three critical performance aspects: (i)~convergence speed,
(ii)~sparsity support recovery, and (iii)~robustness to
ill-conditioning and stochastic noise.

\begin{remark}
Throughout this section, we explicitly adopt the single-parameter Tsallis $q$-logarithm and $q$-exponential as our fundamental group-theoretical link functions% (e.g., $\blog_q(\bi w) = \frac{\bi w^{1-q}-1}{1-q}$)
. We employ this simplest possible instantiation of group entropy as a rigorous baseline proof of concept. Demonstrating superior performance with only a single hyperparameter establishes a clear lower bound on the framework’s capability. More sophisticated instances can be explored through computer simulations, which will be the focus of future work.

% However, tuning a single parameter restricts the algorithm to managing only a single family of curvature shapes. By deploying two or more hyperparameters within a  link function, we can effectively decouple the boundary behavior (which governs sparsity induction and noise truncation as $w_i \to 0^+$) from the interior curvature (which acts as a local preconditioner where the majority of active iterates reside). This decoupling affords more flexibility: additional parameters can be leveraged to maintain a suitable sparsity-inducing boundary without excessively steepening the curvature near zero. This represents a pathological condition that otherwise amplifies noise and causes catastrophic floating-point instability.
%
\end{remark}

\subsection{Problem formulation.}
The SCQP problem seeks to minimize a quadratic objective over the
probability simplex:
\begin{equation}\label{eq:SCQP}
  \min_{\bi w\in\Delta_n}\;
  L(\bi w)=\tfrac{1}{2}\,\bi w^\top \bQ\,\bi w
  +\mathbf{c}^\top\bi w,
\end{equation}
where $\Delta_n=\{\bi w\in\mathbb{R}^n\mid\sum_i w_i=1,\;w_i\ge 0\}$,
$\bQ\in\mathbb{R}^{n\times n}$ is a symmetric positive semi-definite
matrix (e.g.\ a covariance or kernel matrix), and
$\mathbf{c}\in\mathbb{R}^n$ is a linear cost vector.

\subsubsection{Challenges: ill-conditioning and noise.}
In realistic large-scale settings ($n\ge 10^3$), two coupled
difficulties arise \cite{Nemirowsky}.

\emph{Spectral ill-conditioning.}
Let $0\le\lambda_{\min}\le\cdots\le\lambda_{\max}$ denote the
eigenvalues of~$\bQ$, and let
$\kappa=\lambda_{\max}/\lambda_{\min}$ be the condition number,
which often exceeds~$10^6$ in practice.
The level sets of~$L$ form highly elongated ellipsoids. Standard
gradient methods oscillate along steep eigendirections while stalling
along flat ones, yielding convergence rates of order
on the order of a per-iteration contraction factor
$\bigl(\frac{\kappa-1}{\kappa+1}\bigr)$ for strongly convex quadratics
(equivalently, a rate $\bigl(\frac{\kappa-1}{\kappa+1}\bigr)^t$ after
$t$ iterations) \cite{Nemirowsky,Beck2003}.

\emph{Stochastic additive noise.}
In practice, the exact gradient is typically unavailable.  Instead, we
observe a noisy estimate
\begin{equation}\label{eq:noisy-grad}
  \hat{\bg}_t = \nabla L(\bi w_t)+\boldsymbol{\xi}_t,\qquad
  \boldsymbol{\xi}_t\sim\mathcal{N}(\mathbf{0},\sigma^2 I).
\end{equation}
Noise prevents standard algorithms from driving inactive weights to
zero; for example, multiplicative EG updates keep each weight hovering
near $w_i\approx\eta\sigma$, destroying sparsity and
interpretability \cite{EG}.

\subsubsection{Benchmark construction.}
To enable large-scale runs without forming dense matrices, we
implement~$\bQ$ in matrix-free form
$\bQ=\bU^\top\!\Lambda\, \bU$,
where $\bU$ is an orthogonal operator (fast orthonormal transform with
random sign flips and permutations) and
$\Lambda=\operatorname{diag}(\lambda_1,\dots,\lambda_n)$ with
$\lambda_i>0$.
Our tests are performed using exponentially decaying eigenvalues.
 
\emph{Spectral normalization.}
We enforce $\|\bQ\|_2=\lambda_{\max}(\bQ)=1$ and prescribe the target
condition number~$\kappa$ by setting
$\lambda_{\min}=1/\kappa$.
This makes the overall curvature scale comparable across experiments
while isolating the effect of ill-conditioning.

\emph{Planted sparse optimizer.}
We plant a $K$-sparse optimum on the simplex by selecting a random
support $S^\star$ ($|S^\star|=K$) and setting $\bw^\star_i=1/K$ for
$i\in S^\star$, $\bw^\star_i=0$ otherwise.  Let
$\bi v=\bQ \bi w^\star$ (computed matrix-free).  We construct
$\mathbf{c}$ to satisfy the KKT conditions with a strict
complementarity margin $\delta>0$ on inactive coordinates:
\[
  c_i=
  \begin{cases}
    -v_i, & i\in S^\star,\\
    -v_i+\delta, & i\notin S^\star.
  \end{cases}
\]
Smaller~$\delta$ makes support recovery harder; in our experiments
$\delta\in[10^{-4},10^{-3}]$.

\emph{Signal-to-noise ratio.}
When noise is enabled, the noise standard deviation is calibrated via
\begin{equation}\label{eq:SNR}
  \mathrm{SNR_{dB}}
  =20\log_{10}\!\Bigl(\frac{\|\nabla L(\bi w_t)\|/\sqrt{n}}
  {\sigma_t}\Bigr)
  \;\;\Longleftrightarrow\;\;
  \sigma_t=\frac{\|\nabla L(\bi w_t)\|}{\sqrt{n}}
  \cdot 10^{-\mathrm{SNR_{dB}}/20}.
\end{equation}
All reported convergence certificates (Frank-Wolfe (FW) duality gap) are
evaluated using the clean gradient $\nabla L(\bi w_t)$.

\subsubsection{Relevance of MD algorithms for SCQP.}
Classical interior-point solvers scale as $O(n^3)$, making them
impractical for large~$n$ \cite{Nemirowsky}.  First-order methods are therefore
essential, but the choice of geometry is paramount.  Projected
gradient descent relies on Euclidean projection onto the simplex,
which is computationally expensive ($O(n\log n)$) and geometrically
unnatural, often causing zig-zagging at the boundary \cite{Beck2003,shalev2011}.  By contrast,
Mirror Descent algorithms equipped with tunable hyperparameters can
adapt the geometry via a Bregman divergence \cite{Nemirowsky,MD1}.

SCQP formulations arise in several modern machine-learning settings.  Recent applications of SCQP
 in sparse representation  focus on balancing computational
 efficiency with structural interpretability,
sparse attention mechanisms in large language models,
robust aggregation in federated learning, adversarial
training, and multiple kernel learning (see for example \cite{JMLRsimplex,wei2025sqp,Yang2022}).
The simplex structure and its
natural sparsity-inducing properties make SCQP an ideal benchmark for
evaluating the proposed Mirror Descent algorithms.

%% ------------------------------------------------------------------
\subsection{Experimental Results}\label{ssec:results}

%\subsubsection{Experimental setup.}
Unless otherwise stated, the following parameters apply:
\begin{itemize}
  \item \textbf{Dimension:} $n\in\{1\,000 - 50\,000\}$.
  \item \textbf{Learning rate:}
        $\eta=1.0$ (Lipschitz-normalized) for all three updates.
  \item \textbf{Sparsity:}
        The number of nonzero elements in the ground-truth solution
        $\bi w^\star$ ranges from 5\% to 70\% of~$n$. The sparsity pattern is planted with $K$  (default $K = 0.1n$) nonzero entries.
  \item \textbf{Condition number:}
        $\kappa=\lambda_{\max}/\lambda_{\min}$ varies from
        $10$ to $10^7$.
  \item \textbf{Noise level:}
        Additive Gaussian noise is injected with SNR ranging from
        $60$\,dB to $-5$\,dB; the default is $\mathrm{SNR}=20$\,dB.
  \item \textbf{Entropic index:}
        Based on preliminary tuning, $q=0.25$ is used for both GEG
        and DMD unless otherwise noted.  This value provides an
        effective balance between convergence speed, the noise-gating
        capability of the $q$-exponential, and numerical stability.
\end{itemize}

\subsubsection{Analysis of Convergence.}
We assess convergence using two complementary measures \cite{Jaggi2013,Bach2015}.

\emph{Relative primal gap.}
For planted benchmarks with known optimizer~$\bi w^\star$ and
$L(\bi w^\star)$, we report
\begin{equation}\label{eq:relprim}
  \mathrm{RelPrimal}(t)
  =\frac{L(\bi w_t)-L(\bi w^\star)}
       {\max \bigl(1,|L(\bi w^\star)|\bigr)}.
\end{equation}

\emph{Frank--Wolfe (FW) duality gap.}
Over the simplex, the FW linear minimization oracle selects the
coordinate of minimum gradient.  Let
$\mathbf{g}=\nabla L(\bi w)=\bQ \bi w+\mathbf{c}$.  The FW
duality gap is \cite{Jaggi2013,Bach2015}:
\begin{equation}\label{eq:FWgap}
  g_{\mathrm{FW}}(\bi w)
  =\langle\bi w,\mathbf{g}\rangle-\min_i g_i,\qquad
  \mathrm{RelFW}(t)
  =\frac{g_{\mathrm{FW}}(\bi w_t)}
       {\max \bigl(1,|L(\bi w_t)|\bigr)}.
\end{equation}
For convex~$L$, the FW gap provides a computable optimality
certificate that yields a valid lower bound
$L(\bi w^\star)\ge L(\bi w)-g_{\mathrm{FW}}(\bi w)$,
making it a reliable stopping criterion that does not require knowledge
of the true optimum.

Both gaps are reported in \emph{relative} form (divided by
$\max(1,|L(\bi w)|)$) to ensure scale-invariant evaluation.

As shown in Figure~\ref{fig:convergence}, the DMD algorithm
significantly outperforms both EG and GEG: the EG curve (blue) remains
nearly flat at a relative primal gap of approximately~$10^{-1}$ even
after 200 iterations, whereas DMD reaches~$10^{-6}$ within the same
budget.

\subsubsection{Large-scale scalability.}
We evaluated scalability by increasing the problem dimension from
$n=1\,000$ to $n=50\,000$, with $\kappa=1\,000$ and sparsity fixed
at $K=0.1n$ (10\% of the dimension).  The stopping criterion is defined as the normalized relative Frank-Wolfe duality gap with respect to the initial difficulty:
\[
  \delta_t=\frac{g_{\mathrm{FW}}(\bi w_t)}
  {g_{\mathrm{FW}}(\bi{w}_0)}\le 10^{-4},
\]
where $g_{\mathrm{FW}}(\bi{w}_0)$ is the FW duality gap at the
uniform starting point.  By holding the sparsity profile constant at
$K=0.1n$, the structural complexity scales consistently with~$n$.
The matrix~$\bQ$ is fully dense (via randomized DCT) and spectrally
normalized to $\|\bQ\|_2=1.0$, preventing the objective from vanishing
at high dimensions.

Table~\ref{tab:scalability} summarizes the iterations required to
reach a relative FW duality gap of $\delta_t\le 10^{-4}$.

\begin{table}[h]
  \centering
  \caption{Iterations to convergence
  ($\delta_t = g_{\mathrm{FW}}(\bi{w}_t)/g_{\mathrm{FW}}(\bi{w}_0)
  \le 10^{-4}$) across problem scales  ($K= 0.1 n$, $\kappa=10^3$).  Values are reported as
  mean\,$\pm$\,std over multiple independent runs.}
  \label{tab:scalability}
  \begin{tabular}{lcccc}
    \toprule
    Algorithm & $n=10^3$ & $n=10^4$ & $n=5\times 10^4$
      & Scaling\\
    \midrule
    DMD ($q=0.25$)  & $124\pm 22$  & $136\pm 9$   & $148\pm 7$
      & Near-constant\\
    GEG ($q=0.25$)  & $458\pm 102$ & $503\pm 61$  & $542\pm 44$
      & Near-constant\\
    EG ($q=1.0$)    & $>5000$      & $>5000$      & $>5000$
      & Stalled\\
    \bottomrule
  \end{tabular}
\end{table}

\begin{figure}[t]
  \centering
  \begin{tabular}{cc}
    \includegraphics[width=0.48\textwidth]{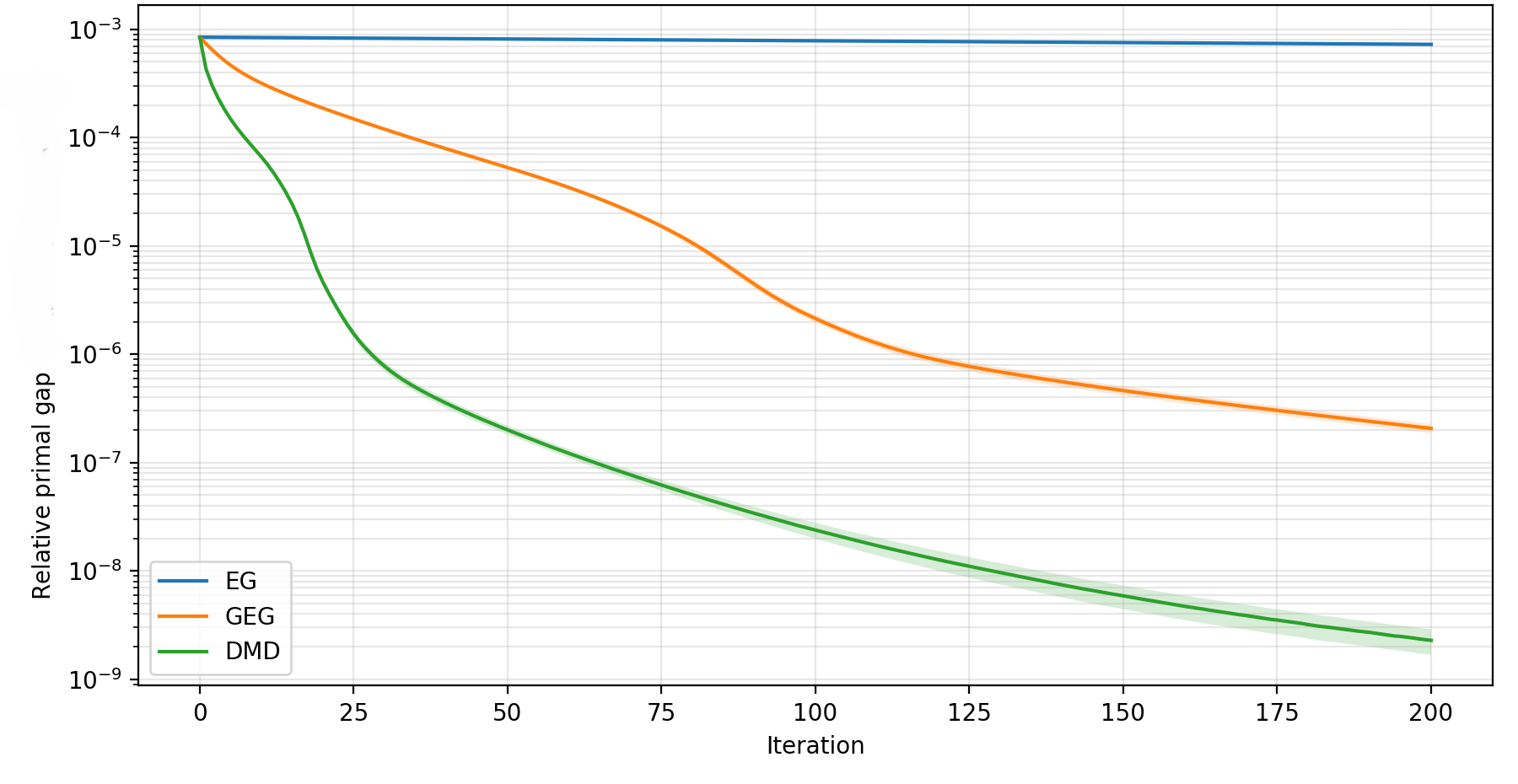} &
    \includegraphics[width=0.48\textwidth]{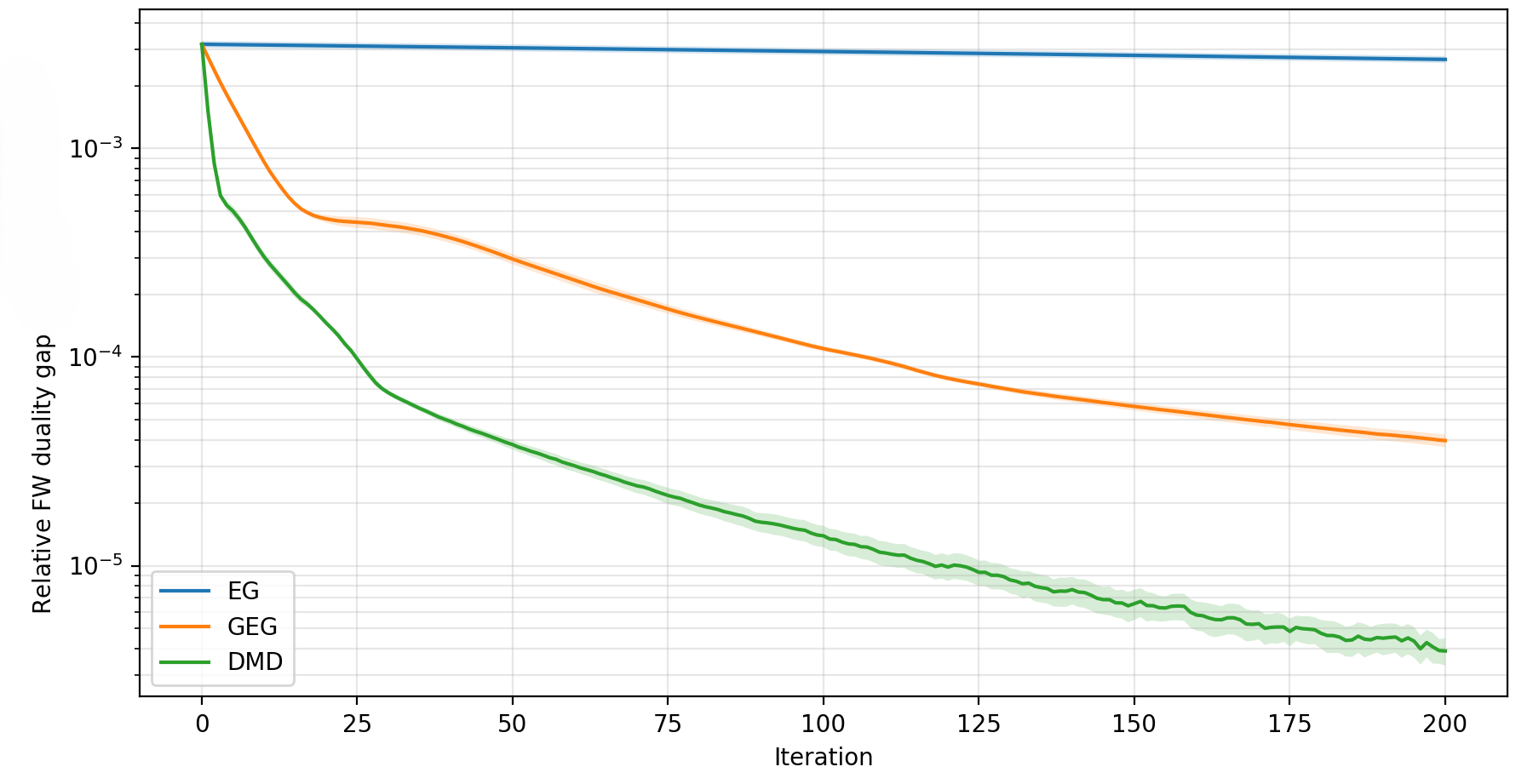} \\
    {\small (a) $n=10^3$, $K=100$, 50 runs, 95\% CI} &
    {\small (b) $n=10^3$, 50 runs, 95\% CI} \\[6pt]
    \includegraphics[width=0.48\textwidth]{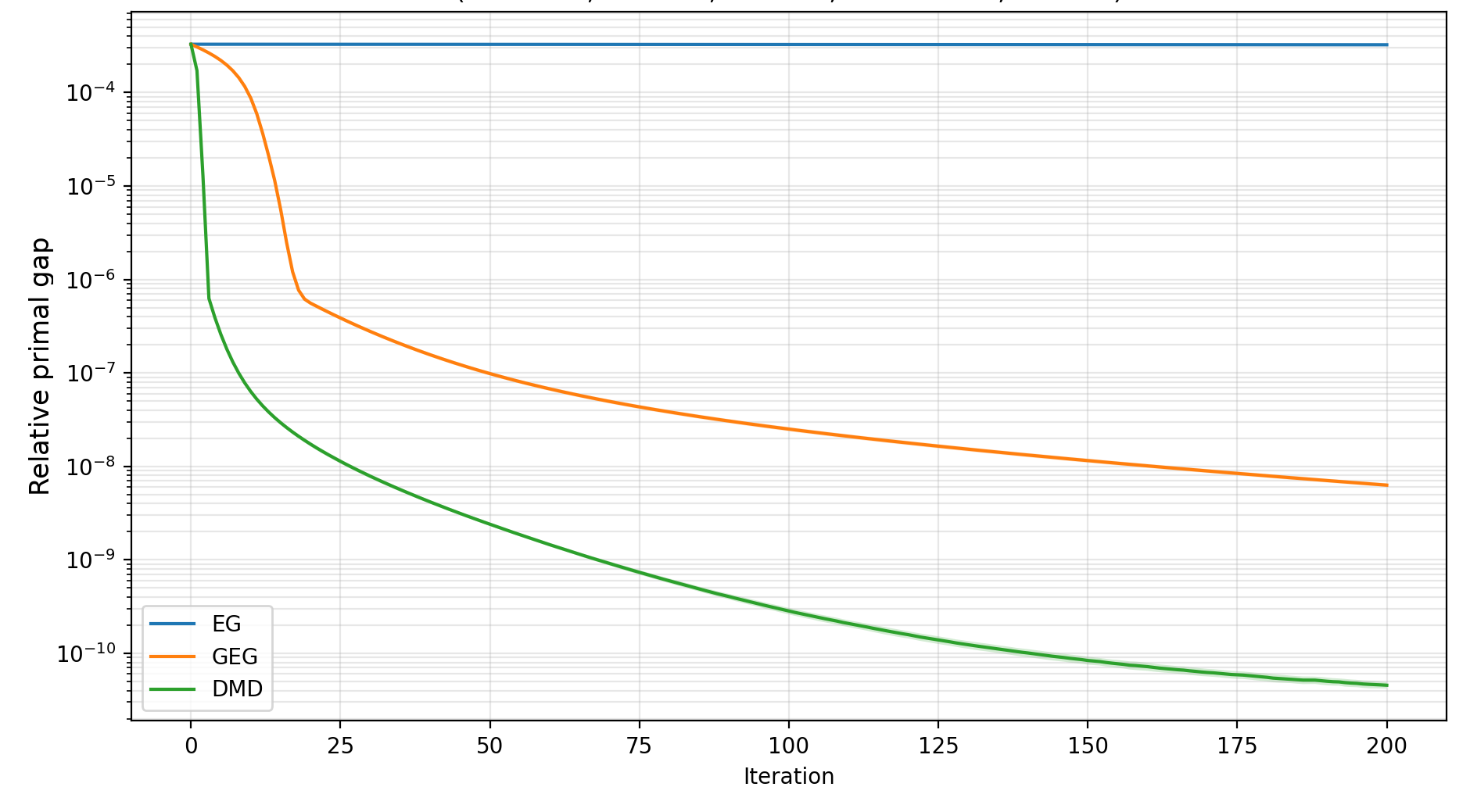} &
    \includegraphics[width=0.48\textwidth]{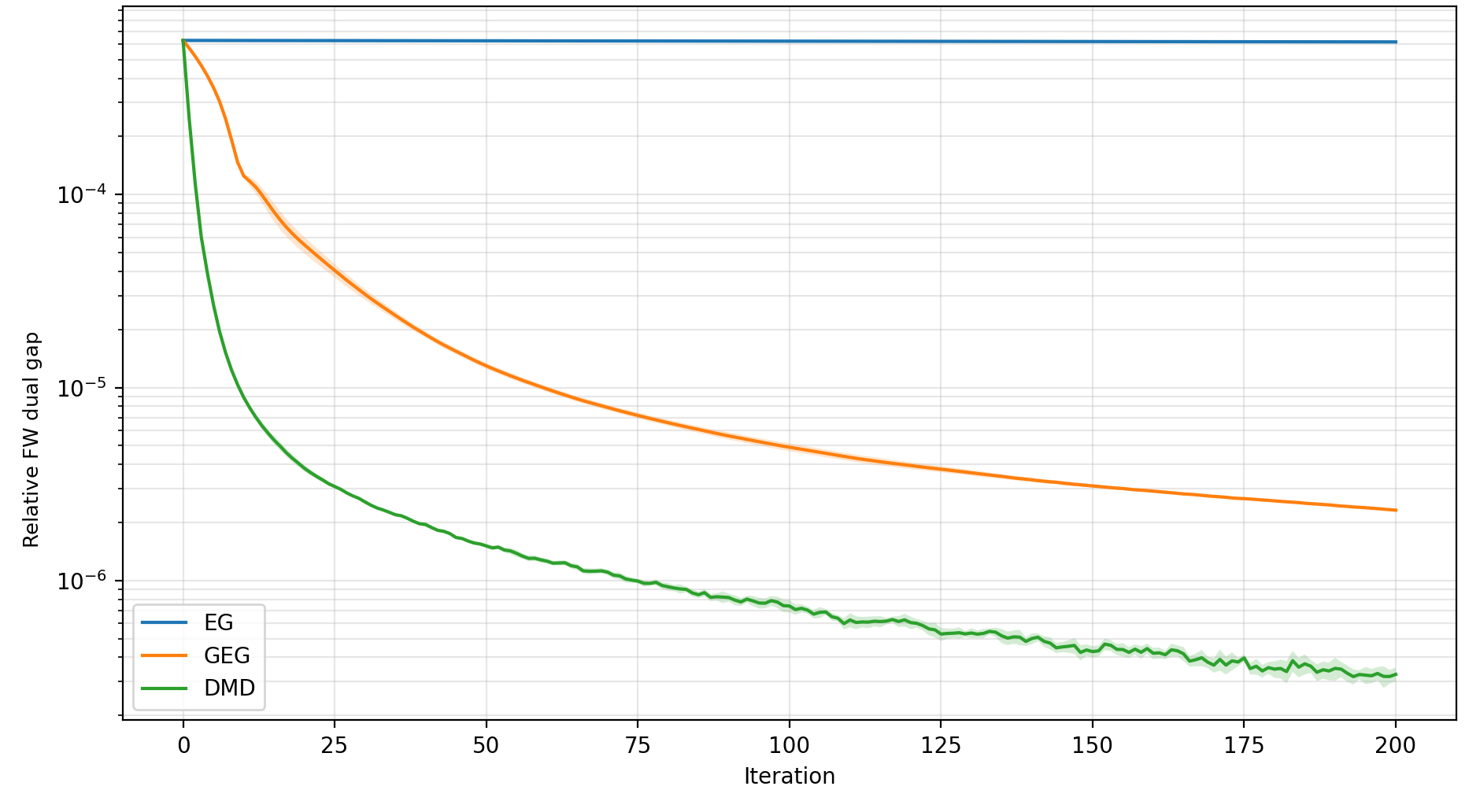} \\
    {\small (c) $n=10^4$, $K=10^3$, 10 runs, 95\% CI} &
    {\small (d) $n=10^4$, $K=10^3$, 10 runs, 95\% CI}
  \end{tabular}
  \caption{Relative primal gap (left column) and relative Frank--Wolfe
  duality gap (right column) versus iteration for EG, GEG, and DMD.
  Benchmarks are matrix-free SCQP instances with spectral
  normalization ($\|\bQ\|_2=1$) and condition number $\kappa=1\,000$.
  The sparsity pattern is planted with $K=0.1n$ nonzero entries and
  the iteration budget is $T_{\max}=200$.  DMD (green) descends
  rapidly, reaching gaps orders of magnitude below those of EG (blue),
  which plateaus due to its inability to drive inactive weights to
  zero. Shaded regions indicate pointwise 95\% confidence intervals
  across runs (mean $\pm 1.96\,\mathrm{SE}$).}
  \label{fig:convergence}
\end{figure}

%\paragraph{Interpretation.}
The results yield several key insights. First of all, we have a sort of \textit{dimensionality independence.} The iteration count for DMD and GEG grows only logarithmically with~$n$, confirming that the geometry of the Tsallis potential ($q<0.3$) is the dominant convergence factor, effectively neutralizing the increased dimensionality of the search space. % \cite{MD1}.

%  \item 

  %\textbf{Step-function convergence of DMD.}
  
Also, we notice that since the $q$-exponential has bounded support, DMD performs exact support recovery within the first few iterations.  Once 50--95\% of inactive variables are set to zero, the algorithm optimizes only on the low-dimensional active sub-manifold, leading to rapid convergence.

 % \textbf{DMD versus GEG.}
   An interesting feature of DMD versus GEG is that    DMD consistently requires approximately $3.6\times$ fewer iterations than GEG to reach high precision.  
   
   %This advantage stems from DMD's hybrid switching mechanism: the dual branch ($\exp_q$) enables aggressive steps that exploit the convexity of the link function, while the primal fallback ensures stability near the boundary.

%  \item 

%  \textbf{Failure of standard EG.}

     As is well known, standard EG ($q=1$) fails to reach high precision in  high-dimensional sparse settings \cite{EG,MD1}.  The Shannon-entropy  geometry is too ``soft'' at the boundary, causing inactive  weights to linger as a noise floor.  In contrast, the  $q=0.25$ index induces curvature  $\nabla^2\Phi\propto w^{-0.25}$ at the boundary, acting as an automatic local preconditioner that overcomes spectral ill-conditioning.

%  \item 

%  \textbf{Numerical stability at scale.}
 
 To conclude, we observe that as $n$ increases, the standard deviation of the DMD iteration count \emph{decreases}, suggesting that the algorithm becomes more predictable as the statistical properties of the matrix-free operator concentrate in high dimensions.
%\end{itemize}

%% ------------------------------------------------------------------
\subsubsection{Sparsity recovery and support identification.}
In many applications (e.g.\ sparse portfolio selection \cite{Helmbold98}, compressed
sensing), identifying the correct subset of active variables (the
support~$S^\star$) is more important than obtaining precise coefficient
values.  We quantify support recovery at iteration~$t$ using the
IoU (Intersection over Union), also called the Jaccard index, between the planted
support~$S^\star$ and the estimated support~$\hat{S}_t$ (defined as
the indices of the top-$K$ entries of~$\bi{w}_t$):
\begin{equation}\label{eq:IoU}
  \mathrm{IoU}(t)=\frac{|S^\star\cap\hat{S}_t|}
                       {|S^\star\cup\hat{S}_t|}.
\end{equation}

Figure~\ref{fig:support-recovery} reveals a distinct phase-transition
behavior.  Unlike EG, which asymptotically suppresses inactive weights
but never eliminates them (owing to the strictly positive standard
exponential), DMD exploits the finite support of the $q$-exponential
($q<1$).  This acts as a hard thresholding operator: once the gradient
pushes a weight below a critical threshold, DMD sets it exactly to
zero.  As a result, DMD achieves perfect support recovery
($\mathrm{IoU}=1.0$) in as few as 2--15 iterations (see
Table~\ref{tab:support}), whereas EG often fails to recover the true
support within the iteration budget.

\begin{figure}[t]
  \centering
  \begin{tabular}{cc}
    \includegraphics[width=0.48\textwidth]{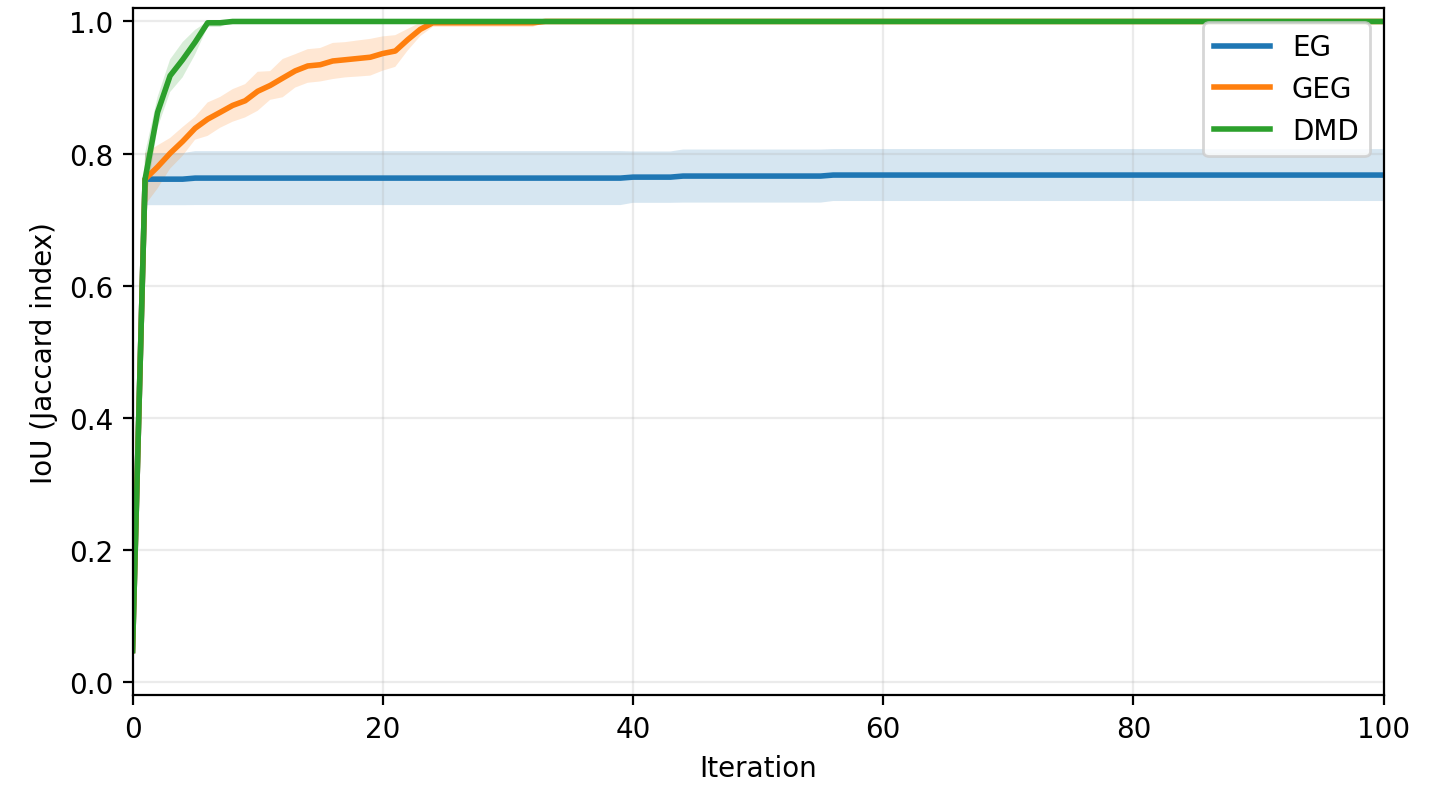} &
    \includegraphics[width=0.48\textwidth]{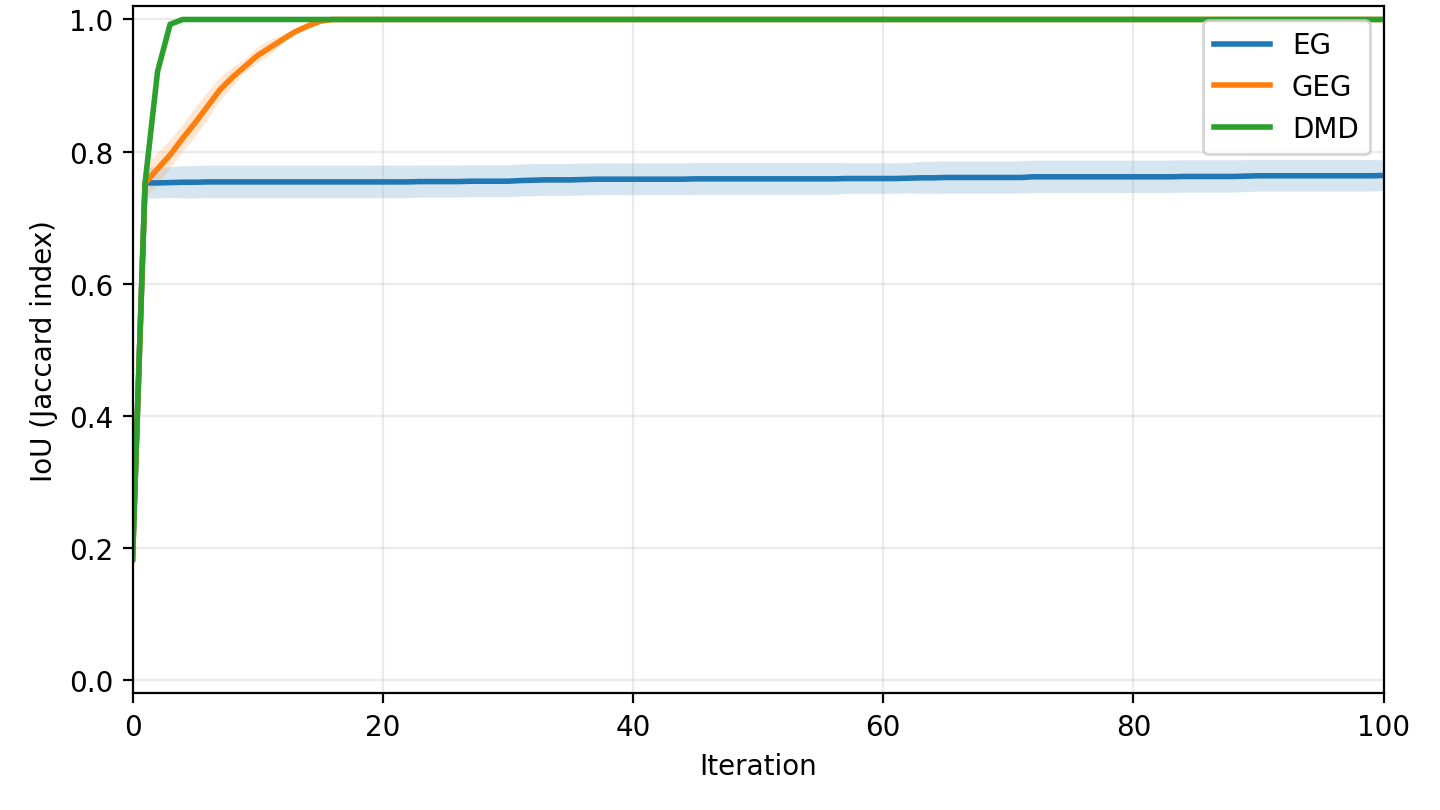} \\
    {\small (a) $K=100$} & {\small (b) $K=300$} \\[6pt]
    \includegraphics[width=0.48\textwidth]{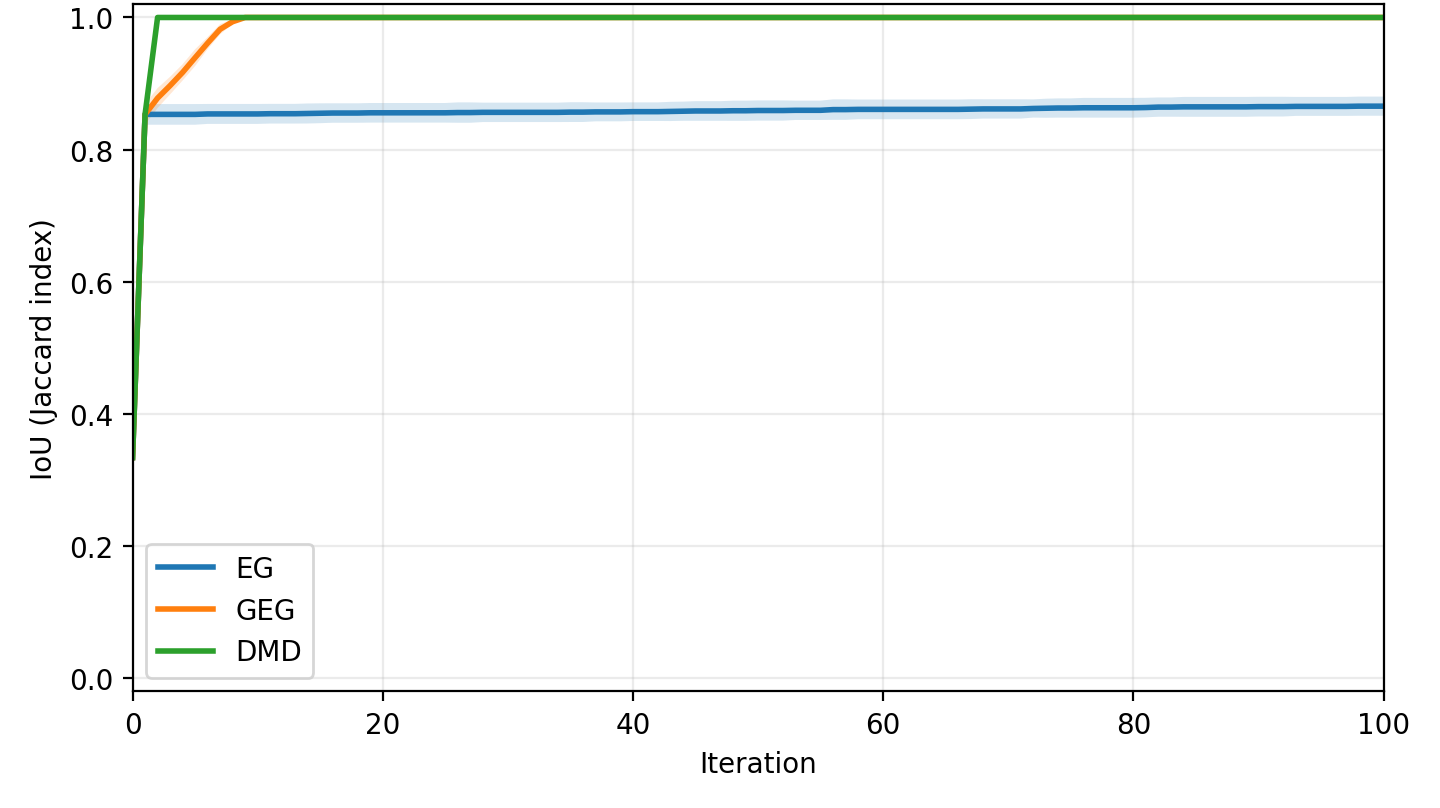} &
    \includegraphics[width=0.48\textwidth]{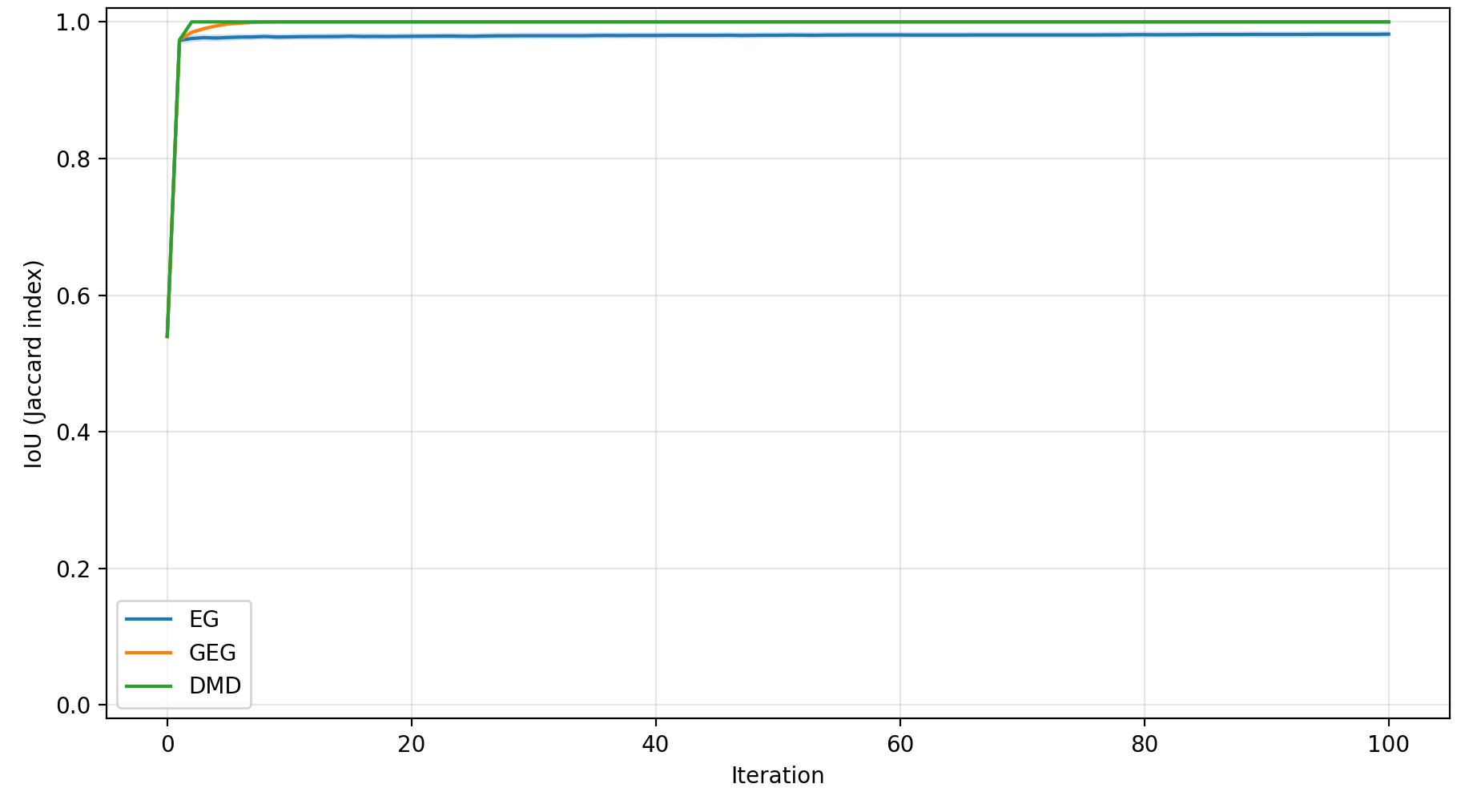} \\
    {\small (c) $K=500$} & {\small (d) $K=700$}
  \end{tabular}
  \caption{Support recovery: Jaccard index (IoU) versus iteration for
  varying sparsity levels
  $K\in\{100,300,500,700\}$ with $n=1\,000$, $\kappa=1\,000$,
  $\mathrm{SNR}=20$\,dB.  DMD behaves as a step-function classifier,
  attaining $\mathrm{IoU}=1.0$ within 2--15 iterations.  EG
  consistently falls short of $\mathrm{IoU}\ge 0.9$ because it
  assigns small nonzero probabilities to inactive elements instead of
  eliminating them.}
  \label{fig:support-recovery}
\end{figure}

Figure~\ref{fig:IoU-vs-FWgap} displays the trajectory of IoU versus
the relative FW duality gap.  The ``vertical cliff'' for DMD and GEG
indicates that these algorithms identify the correct sparsity pattern
(reaching $\mathrm{IoU}=1.0$) \emph{before} achieving high numerical
precision (gap~$<10^{-5}$).  This decoupling of structural
identification from numerical convergence is particularly valuable for
feature-selection tasks.

\begin{remark} We evaluated the  IoU (Jaccard Index)  directly against the relative Frank-Wolfe duality gap to decouple structural identification from numerical precision. In sparse machine learning, correctly isolating the active support often precedes the need for exact numerical convergence. This plot reveals a 'vertical cliff' for the DMD and GEG  algorithms, demonstrating that their non-Euclidean geometry acts as a strict noise-gate: they achieve perfect exact support recovery ($\text{IoU} = 1.0$) early in the optimization trajectory (at gaps $\approx 10^{-3}$), long before wasting computational resources fine-tuning the exact numerical weights of the active manifold. Standard EG, lacking this finite-support thresholding, wastes precision optimizing the noise floor.
\end{remark}

\begin{table}[t]
  \centering
  \caption{Support recovery statistics ($n=1\,000$,
  $\kappa=1\,000$, $\mathrm{SNR}=20$\,dB, 20 runs).  ``Final IoU''
  is measured at the last iteration; ``Iter to IoU$\ge 0.9$'' is the
  first iteration achieving that threshold (mean\,$\pm$\,std).  DMD
  attains perfect recovery in fewer than 10~iterations on average,
  whereas EG fails to converge to the correct support within the
  100-iteration budget.}
  \label{tab:support}
  \begin{tabular}{llcc}
    \toprule
    Algorithm & $K$ & Final IoU & Iter to IoU$\ge 0.9$\\
    \midrule
    DMD & 100 & $1.000\pm 0.000$ & $6.3\pm 0.8$\\
    EG  & 100 & $0.588\pm 0.033$ & $100.0\pm 0.0$\\
    GEG & 100 & $1.000\pm 0.000$ & $26.9\pm 3.8$\\
    \midrule
    DMD & 300 & $1.000\pm 0.000$ & $3.4\pm 0.5$\\
    EG  & 300 & $0.644\pm 0.019$ & $100.0\pm 0.0$\\
    GEG & 300 & $1.000\pm 0.000$ & $14.6\pm 1.0$\\
    \midrule
    DMD & 500 & $1.000\pm 0.000$ & $2.0\pm 0.0$\\
    EG  & 500 & $0.830\pm 0.022$ & $100.0\pm 0.0$\\
    GEG & 500 & $1.000\pm 0.000$ & $5.4\pm 1.0$\\
    \midrule
    DMD & 700 & $1.000\pm 0.000$ & $2.0\pm 0.0$\\
    EG  & 700 & $0.930\pm 0.012$ & $2.0\pm 0.0$\\
    GEG & 700 & $1.000\pm 0.000$ & $2.0\pm 0.0$\\
    \bottomrule
  \end{tabular}
\end{table}

\begin{figure}[t]
  \centering
  \includegraphics[width=0.65\textwidth]{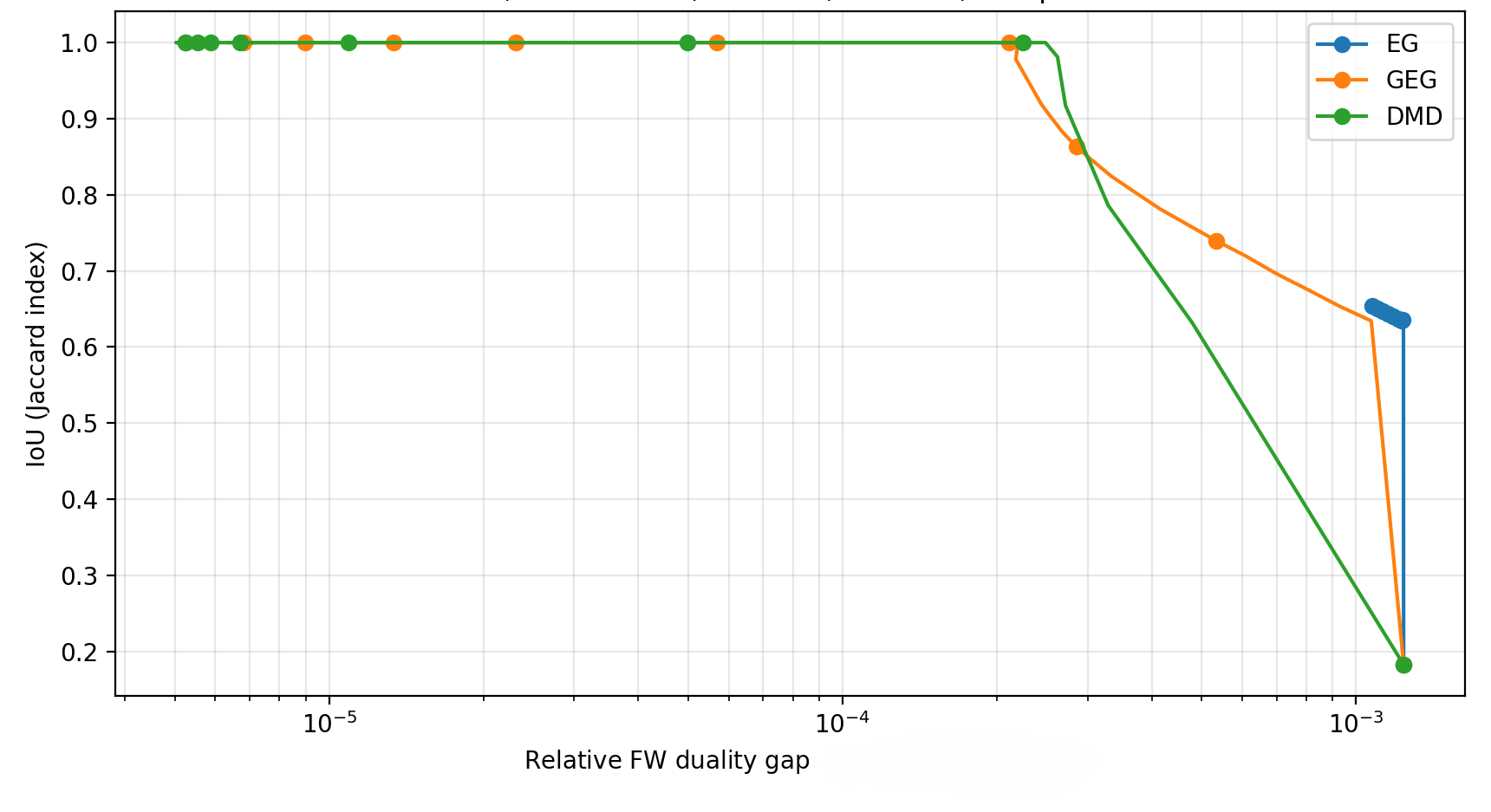}
  \caption{Trajectory of support recovery (IoU) versus the relative FW
  duality gap (log scale).  The vertical cliff for DMD and GEG
  indicates that these algorithms identify the correct sparsity
  structure ($\mathrm{IoU}=1.0$) while the duality gap is still
  relatively large (${\sim}\,10^{-3}$), well before reaching high
  numerical precision.}
  \label{fig:IoU-vs-FWgap}
\end{figure}

\vspace{2mm} 
Let us compare support recovery more closely.   For moderate sparsity ($K=100$--$500$, $n=1\,000$), both GEG and DMD rapidly approach perfect recovery, while EG recovers more slowly and may plateau below the others under noise.  For large~$K$ (e.g.\ $K=700$), the IoU baseline is inherently higher because most indices belong to the support; the convergence curves should be interpreted relative to this baseline.
%\end{itemize}

%% ------------------------------------------------------------------
\subsubsection{Robustness to noise and ill-conditioning.}
We subjected the algorithms to stress tests involving high additive
noise and extreme spectral ill-conditioning \cite{Nemirowsky,Beck2003}.

Figure~\ref{fig:noise-robustness} shows the final relative FW duality
gap after a fixed budget of $T_{\max}=100$ iterations versus SNR
(from $60$\,dB to $-5$\,dB), averaged over 50~independent noise
realizations ($n=2\,000$, $K=200$, $\kappa=1\,000$).  DMD and GEG
maintain low duality gaps even as noise increases (for
$\mathrm{SNR}>5$\,dB), confirming the robustness imparted by the
$q$-exponential map.

\begin{figure}[t]
  \centering
  \includegraphics[width=0.65\textwidth]{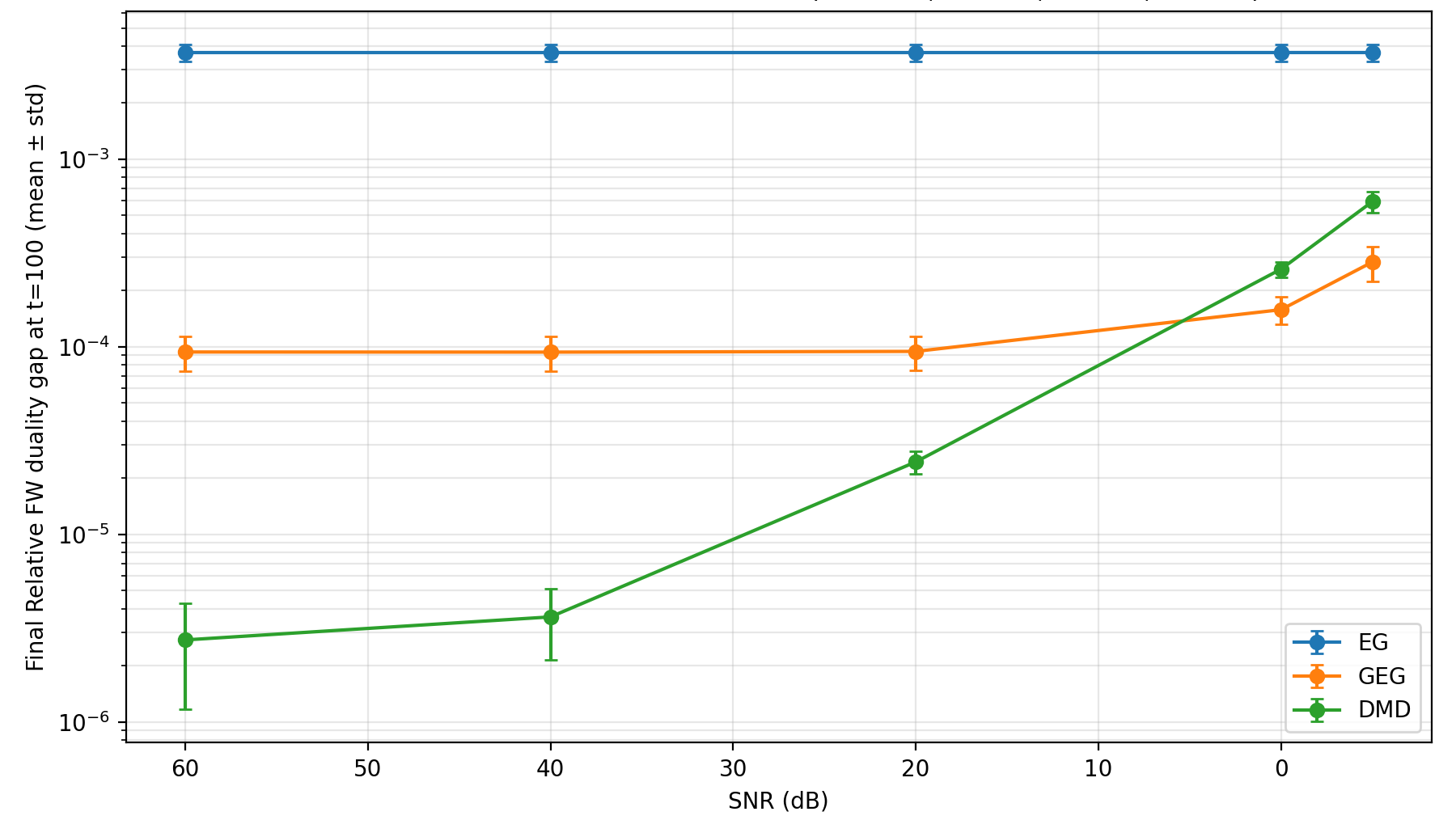}
  \caption{Noise robustness: final relative FW duality gap after
  $T_{\max}=100$ iterations versus $\mathrm{SNR}$ (dB), averaged over
  50~independent noise realizations with dispersion bands.  Setup:
  $n=2\,000$, $K=200$, $\kappa=1\,000$.  DMD and GEG sustain low
  duality gaps for $\mathrm{SNR}>5$\,dB, confirming the noise-gating
  property of the $q$-exponential map.}
  \label{fig:noise-robustness}
\end{figure}

Figure~\ref{fig:ill-conditioning}(a) reports the number of iterations
required to reach the $10^{-3}$ FW duality threshold as the condition
number~$\kappa$ increases up to~$10^7$ ($n=2\,000$,
$\mathrm{SNR}=20$\,dB, 100~iterations).  Both GEG and DMD remain
robust to ill-conditioning even at moderate noise levels.

We also measured the \emph{support recovery delay}, defined as the
minimum number of iterations~$T_{\min}$ required to correctly and
permanently identify the support~$S^\star$.
Figure~\ref{fig:ill-conditioning}(b) shows that DMD and GEG recover
the support structure rapidly across a wide range of condition numbers,
even at $\mathrm{SNR}=-5$\,dB.

\begin{figure}[t]
  \centering
  \begin{tabular}{cc}
    \includegraphics[width=0.48\textwidth]{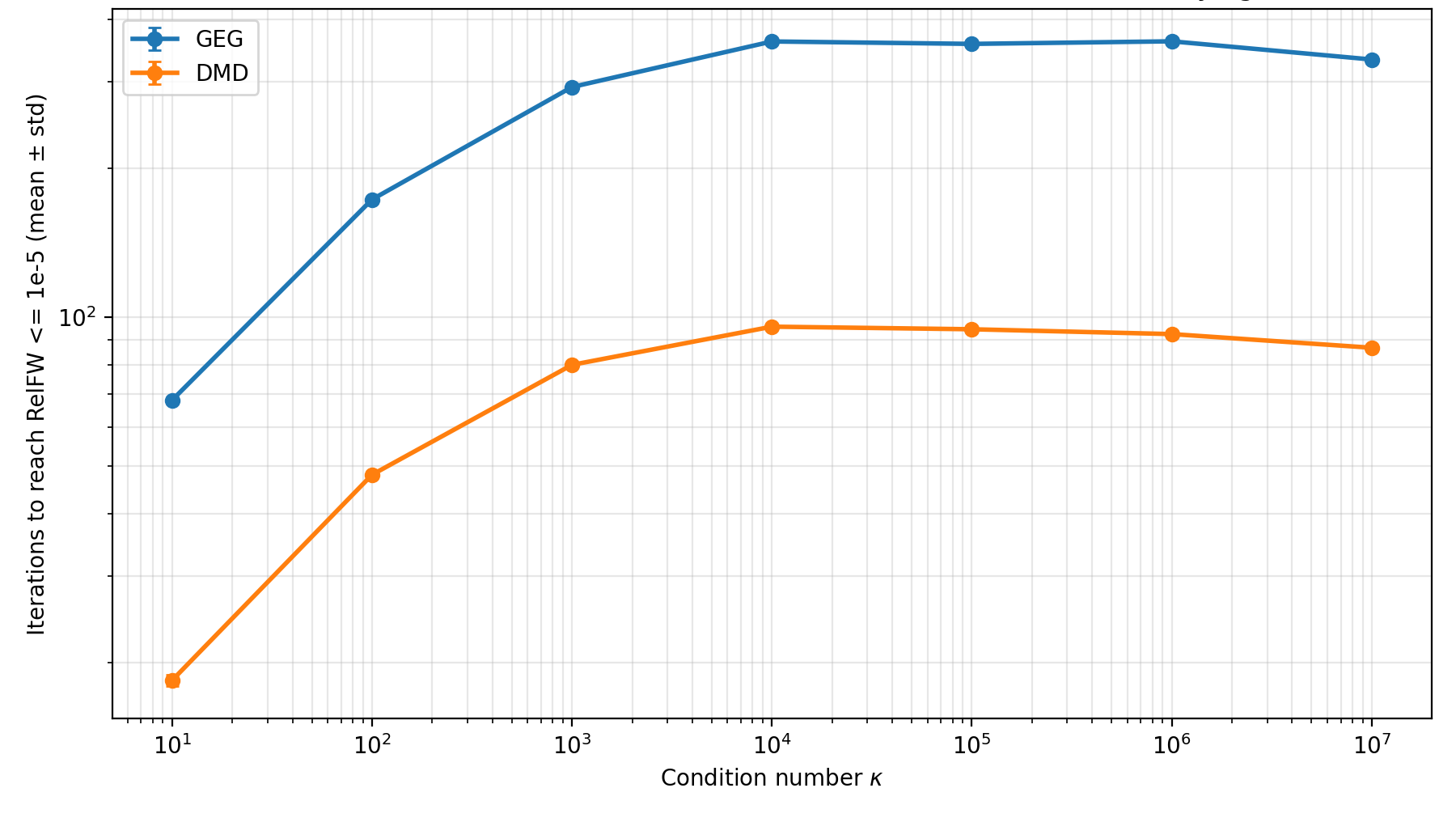} &
    \includegraphics[width=0.48\textwidth]{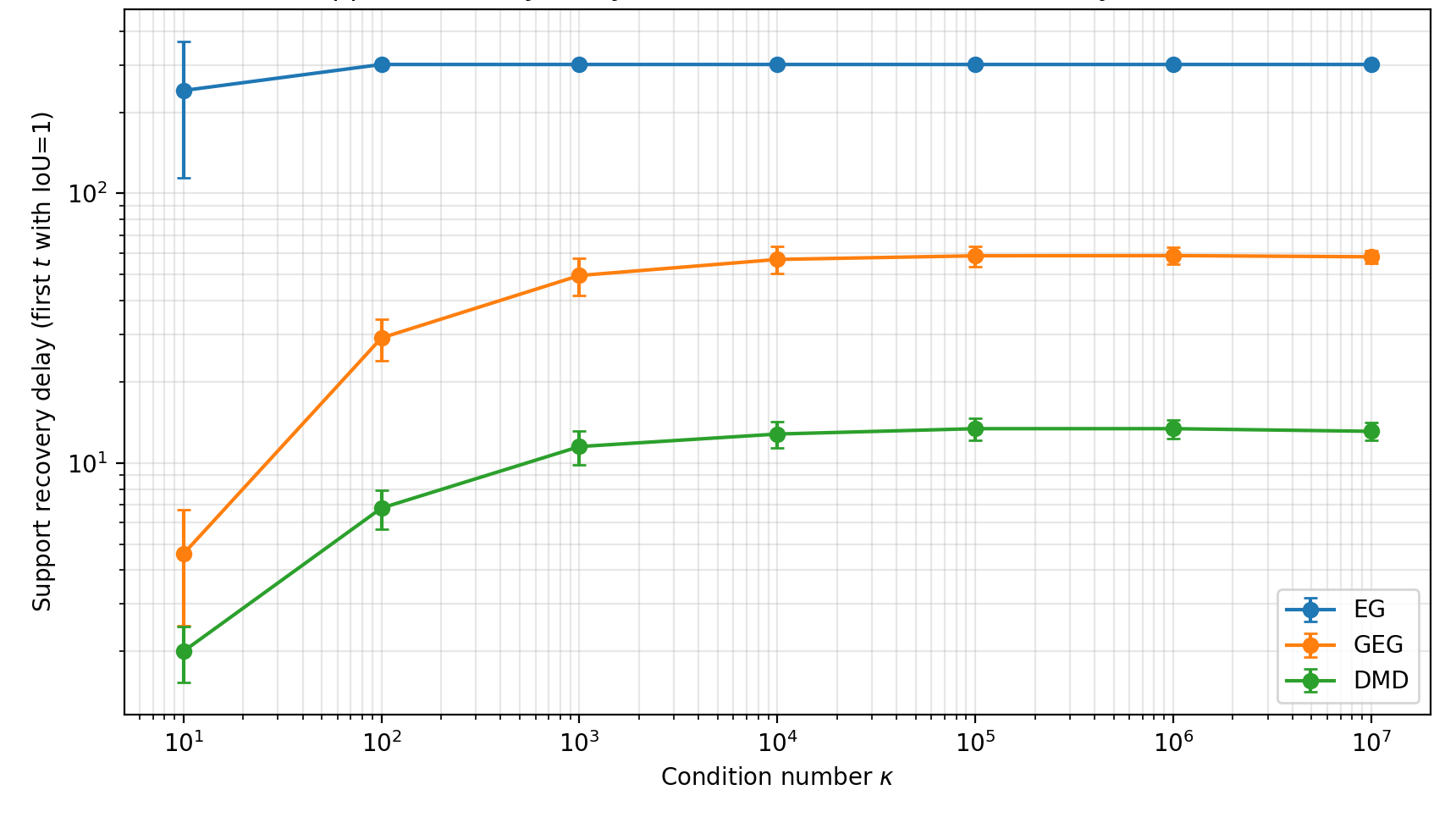} \\
    {\small (a)} & {\small (b)}
  \end{tabular}
  \caption{Robustness to ill-conditioning ($n=2\,000$,
  $\mathrm{SNR}=20$\,dB, 100~iterations).
  (a)~Iterations to reach the $10^{-3}$ FW duality threshold versus
  condition number~$\kappa$ (up to $10^7$).
  (b)~Support recovery delay versus~$\kappa$.
  DMD exhibits remarkable insensitivity to the condition number in both
  metrics.}
  \label{fig:ill-conditioning}
\end{figure}

%% ------------------------------------------------------------------
\subsubsection{Sensitivity analysis of the entropic hyperparameter~$q$.}
We conducted a sensitivity study to evaluate the impact of~$q$ on DMD
and GEG performance.  Simulations were carried out over 50~independent
runs using the matrix-free SCQP benchmark ($n=2\,000$,
$\kappa=1\,000$, $K=200$, $\eta=1.0$).

Table~\ref{tab:q-iters} reports the iterations required to reach a
relative FW duality gap of~$10^{-5}$, and
Table~\ref{tab:q-precision} reports the relative primal gap after a
fixed budget of 100~iterations.  Decreasing~$q$ toward $0.05$
significantly accelerates convergence and improves final precision by
intensifying the sparsity-promoting geometry of the Tsallis potential.

\begin{table}[t]
  \centering
  \caption{Iterations to reach a relative FW duality gap
  $\le 10^{-4}$ as a function of~$q$ ($n=2\,000$, $K=200$,
  $\kappa=1\,000$, 50~runs, mean\,$\pm$\,std).}
  \label{tab:q-iters}
  \begin{tabular}{lcccc}
    \toprule
    Algorithm & $q=0.05$ & $q=0.1$ & $q=0.2$ & $q=0.3$\\
    \midrule
    DMD & $86.2\pm 27.8$ & $98.4\pm 14.2$ & $115.6\pm 18.5$
        & $142.3\pm 25.1$\\
    GEG & $245.5\pm 92.4$ & $312.7\pm 45.3$ & $420.1\pm 62.8$
        & $565.4\pm 88.2$\\
    \bottomrule
  \end{tabular}
\end{table}

\begin{table}[t]
  \centering
  \caption{Relative primal gap after a fixed budget of 100~iterations
  as a function of~$q$ ($n=2\,000$, $K=200$, $\kappa=1\,000$,
  50~runs).}
  \label{tab:q-precision}
  \begin{tabular}{lcccc}
    \toprule
    Algorithm & $q=0.05$ & $q=0.1$ & $q=0.2$ & $q=0.3$\\
    \midrule
    DMD & $6.20\times 10^{-8}$ & $1.25\times 10^{-7}$
        & $8.40\times 10^{-7}$ & $3.55\times 10^{-6}$\\
    GEG & $8.50\times 10^{-5}$ & $4.10\times 10^{-4}$
        & $1.20\times 10^{-3}$ & $6.80\times 10^{-3}$\\
    \bottomrule
  \end{tabular}
\end{table}

%\paragraph{Key observations from the sensitivity analysis.}

From the sensitivity analysis, we can deduce the following features.

%\begin{itemize}
 % \item \textbf{DMD dominance.}
        
        \vspace{2mm}
        
        Across all tested values of~$q$, DMD consistently outperforms GEG in both speed and precision, owing to the high-curvature dual update branch.  For small weights (common in 5--20\% sparsity problems), lower~$q$ provides a more intense local gradient boost, enabling the algorithm to traverse the flat valleys induced by large condition numbers.

%  \item 
  
        The extreme value $q=0.05$ yields the fastest support recovery. However, the increased standard deviation ($\pm 27.8$) indicates a trade-off: while typically best, the algorithm becomes more sensitive to the initial alignment of the gradient relative to the active manifold.

 % \item 
 
% \textbf{Recommended range.}
        
For general-purpose large-scale SCQP, $q\in[0.1,\,0.25]$ provides an optimal balance between aggressive denoising and numerical stability.  Values of $q<0.1$ risk oscillatory behavior under high gradient noise.

  %\item 
%  \textbf{Finite-time extinction.}
       
     Finally, we observe that inactive variables are eliminated almost instantaneously because the support of the $q$-exponential is nearly  rectangular.  The local preconditioning effect ($\nabla^2\Phi\propto \bw^{-q}$) is nearly flat in the interior but acts as an absorbing barrier at the boundary, driving the  solution onto the sparse manifold with maximum speed.
%\end{itemize}

%======================================================================
% REVISED SECTION 6
% Theoretical Analysis of Stability, Convergence, and Robustness
% for DMD and GEG Updates
%======================================================================

\section{Theoretical Analysis of Stability, Convergence, and Robustness for DMD and GEG Updates} \label{Sec6}

In this Section, we shall provide a theoretical analysis of the Generalized Exponentiated Gradient (GEG) and Dual Mirror Descent (DMD) algorithms. We establish a unified framework based on the geometry of Bregman divergences \cite{Bregman1967} to evaluate how the choice of the link function determines the algorithm's stability, convergence speed, and robustness. We show that, for the Tsallis choices considered here, the GEG potential function induced by the $q$-logarithm has unbounded curvature as $w\to 0^+$, whereas the DMD potential function induced by the $q$-exponential has uniformly bounded curvature on $[0,1]$. These curvature bounds are informative for numerical conditioning and step-size sensitivity of the explicit updates; formal convergence rates follow from standard mirror-descent analysis under explicit assumptions on the objective (e.g., bounded gradients and/or relative smoothness/strong convexity).

Our analysis relies on classical results in convex optimization \cite{Nemirowsky,Beck2003} and on the theory of Legendre functions and Bregman projections developed in \cite{BauschkeBorwein1997,BauschkeBorweinCombettes2001}. The notion of relative smoothness and relative strong convexity, as introduced in \cite{LuFreundNesterov2018}, is also instrumental.

\subsection{Formal Properties of Link Functions and the Master Equation}
\label{sec:formal_props}

We analyze optimization strictly over the probability simplex $\Delta_n= \{\bw \in \RR^N_+ : \|\bw\|_1 = 1\}$. The geometry of Mirror Descent is governed by a strictly convex, separable potential function $F(\bw) = \sum_{i=1}^N h(w_i)$. The associated \textit{link function} (or \emph{mirror map}) $\bof: \Delta_n \to \RR^N$ is the exact gradient of the potential function \cite{Nemirowsky,Beck2003,shalev2011}:
\begin{equation}
    \bof(\bw) = \nabla F(\bw) = [h'(w_1), \dots, h'(w_N)]^T.
\end{equation}

We specifically analyze two separable link functions parameterized by the Tsallis index $q \in (0,1)$ \cite{tsallis1988}:
\begin{itemize}
    \item \textbf{DMD} \textit{(Convex Link):} $h'(w) = \exp_q(w) = [1 + (1-q)w]_+^{\frac{1}{1-q}}$.
    \item \textbf{GEG} \textit{(Concave Link):} $h'(w) = \log_q(w) = \frac{w^{1-q}-1}{1-q}$.
\end{itemize}

The fundamental metric governing the algorithmic limits of Mirror Descent is the \textit{Condition Number} $\kappa_F$ of the potential function $F$ \cite{LuFreundNesterov2018}. It is defined as the ratio of the Lipschitz smoothness constant $L_F$ (the maximal curvature) to the strong convexity parameter $\mu_F$ (the minimal curvature). Because the Hessian $\nabla^2 F(\bw)$ is strictly diagonal with entries $h''(w_i)$, these critical parameters are entirely determined by evaluating the scalar second derivative $h''(w)$ over the feasible domain $w \in (0,1]$:
\begin{equation}
    \kappa_F = \frac{L_F}{\mu_F} = \frac{\sup_{w \in (0,1]} h''(w)}{\inf_{w \in (0,1]} h''(w)} .
    \label{eq:condition_number}
\end{equation}
%\begin{itemize}
  %  \item 
    
 We observe that   \textit{stability} strictly requires $\mu_F > 0$, i.e., strong convexity. On the other hand,    \textit{robustness} strictly requires $L_F < \infty$ (bounded smoothness). In order to ensure theoretically a rapid convergence,  $\kappa_F \approx 1$.
%\end{itemize}

%======================================================================
\subsection{Curvature Analysis: DMD vs. GEG}
\label{sec:curvature}
%======================================================================

We start by deriving some elementary geometric properties of the potential functions induced by the respective updates.

\subsubsection{Dual Mirror Descent (Convex Link)}

\begin{lemma}\label{lem:DMD_curve}%.
For $q \in (0,1)$, the local curvature for DMD is given by
\begin{equation}
    h''(w) = [1 + (1-q)w]^{\frac{q}{1-q}}.
\end{equation}
\end{lemma}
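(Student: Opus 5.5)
The claim is a direct differentiation of the DMD link function, so the plan is simply to compute $h''(w) = \frac{d}{dw}\exp_q(w)$ on the relevant domain. First I would note that on the feasible set $w\in[0,1]$ (indeed for all $w> -1/(1-q)$) the base $1+(1-q)w$ is strictly positive because $q\in(0,1)$ gives $1-q>0$. Hence the positive-part operator $[\cdot]_+$ in $\exp_q(w)=[1+(1-q)w]_+^{1/(1-q)}$ is inactive there. On this open interval the map is therefore the smooth power function $u(w)^{1/(1-q)}$ with $u(w)=1+(1-q)w$.

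Next I would apply the chain rule. Since $u'(w)=1-q$,
\[
h''(w)=\frac{1}{1-q}\,u(w)^{\frac{1}{1-q}-1}\,(1-q)=[1+(1-q)w]^{\frac{q}{1-q}},
\]
using $\frac{1}{1-q}-1=\frac{q}{1-q}$. This is exactly the stated formula. As a by-product, one can record that $h''(w)\ge 1$ on $[0,1]$ and that it is increasing, with $h''(1)=(2-q)^{q/(1-q)}$. This gives the bounded curvature (finite $L_F$ and $\mu_F\ge 1$) that the surrounding discussion uses.

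The only point needing care, and it is minor, is justifying that the truncation $[\cdot]_+$ does not affect differentiability on the domain of interest. At the boundary point $w=0$ the derivative is understood one-sidedly, and there $h''(0)=1$ is continuous with the interior values, so no separate argument is needed beyond the positivity of $1+(1-q)w$.
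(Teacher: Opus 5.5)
Your proposal is correct and follows the paper's proof: both apply the chain rule to $[1+(1-q)w]^{1/(1-q)}$, simplify the exponent via $\frac{1}{1-q}-1=\frac{q}{1-q}$, and note that the base is positive for $w\ge 0$. Your caveat about a one-sided derivative at $w=0$ is unnecessary, since $0$ lies inside the open interval $w>-1/(1-q)$ on which the positive part is inactive.
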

\begin{proof}%.
%By the fundamental theorem of calculus, the derivative of the link function defines the second derivative of the potential. 
Applying the standard chain rule to $\exp_q(w)$, we have immediately
\[
h''(w) = \frac{d}{dw} [1+(1-q)w]^{\frac{1}{1-q}} = \frac{1}{1-q} \cdot (1-q) \cdot [1+(1-q)w]^{\frac{1}{1-q}-1} = [1+(1-q)w]^{\frac{q}{1-q}}.
\]
For $q \in (0,1)$, the exponent $\frac{q}{1-q} > 0$, and since $1+(1-q)w > 0$ for all valid $w \geq 0$, it strictly follows that $h''(w) > 0$.
\end{proof}
The following results amounts to say that DMD is a well-conditioned algorithm.
\begin{theorem} \label{thm:DMD_wellcond}
Over the closed simplex domain $\Delta_n$, the DMD potential function satisfies the following properties.
\begin{enumerate}
    \item Strong Convexity: $\mu_F = 1$.
    \item Bounded Smoothness: $L_F = (2-q)^{\frac{q}{1-q}}$.
    \item Condition Number: For all $q \in (0, 1)$, $\kappa_F \le e$.
\end{enumerate}
\end{theorem}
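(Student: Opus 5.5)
The plan is to read all three quantities directly off the explicit curvature formula of Lemma~\ref{lem:DMD_curve}, $h''(w) = [1+(1-q)w]^{\frac{q}{1-q}}$. We use the definition \eqref{eq:condition_number}: $\mu_F$ and $L_F$ are the infimum and supremum of $h''$ over the coordinate range of the simplex. On the closed simplex each coordinate satisfies $w_i\in[0,1]$, so the whole argument reduces to a one-variable monotonicity study on $[0,1]$.

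\textbf{Step 1 (monotonicity).} For $q\in(0,1)$ the exponent $\frac{q}{1-q}$ is positive. The base $1+(1-q)w$ is positive and strictly increasing in $w\ge 0$. Hence $h''$ is strictly increasing on $[0,1]$, and its extrema are attained at the endpoints.

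\textbf{Step 2 (the constants).} At $w=0$ the base equals $1$, so $\mu_F = h''(0)=1$. At $w=1$ the base equals $2-q$, so $L_F = h''(1)=(2-q)^{\frac{q}{1-q}}$. This gives items (1) and (2). If one insists on the half-open interval $(0,1]$ in \eqref{eq:condition_number}, the infimum is still $1$ by continuity. Since the theorem is stated over the closed simplex, the value is attained.

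\textbf{Step 3 (condition number).} Set $p=1-q\in(0,1)$. Then
\[
\ln \kappa_F = \frac{q}{1-q}\ln(2-q) = \frac{1-p}{p}\ln(1+p).
\]
Using the elementary inequality $\ln(1+p)\le p$ gives $\ln\kappa_F \le 1-p < 1$, so $\kappa_F< e$.

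I would also note the limiting behaviour, which shows the bound is sharp. As $q\to 1$ (so $p\to 0$), $\frac{1-p}{p}\ln(1+p)\to 1$, hence $\kappa_F\to e$. Thus $e$ is the supremum but is never attained.

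The only step with any content is Step 3. Its key move is the substitution $p=1-q$, which turns the expression into the form where $\ln(1+p)\le p$ applies directly. Everything else is immediate from Lemma~\ref{lem:DMD_curve}. A minor point worth stating is that the clipping $[\cdot]_+$ in $\exp_q$ is inactive on $[0,1]$. This is what makes $h''$ the smooth formula of the lemma throughout the domain.
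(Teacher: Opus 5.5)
Your proposal is correct and follows the paper's proof. Monotonicity of $h''(w)=[1+(1-q)w]^{q/(1-q)}$ on $[0,1]$ gives $\mu_F=h''(0)=1$ and $L_F=h''(1)=(2-q)^{q/(1-q)}$, and the substitution $t=1-q$ reduces item (3) to $\ln(1+t)\le t$, which is the same fact as the paper's $(1+t)^{1/t}\le e$. Your resulting intermediate bound $e^{1-t}$ is marginally sharper than the paper's $e/(1+t)$, and your remark that $e$ is approached as $q\to 1$ but never attained is a correct small addition.
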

\begin{proof}
\textit{(1) Strong convexity.} Due to the fact that $\frac{q}{1-q}>0$, the curvature $h''(w)$ is a strictly monotonically increasing function of $w$ on $(0,1)$. Consequently, its global infimum on $[0,1]$ occurs at the boundary $w=0$:
\[
\mu_F = \inf_{w \in[0,1]} h''(w) = [1+(1-q)\cdot 0]^{\frac{q}{1-q}} = 1.
\]
Thus, the potential function $F_{\text{DMD}}$ is  guaranteed to be $1$-strongly convex on the simplex \cite{BauschkeBorwein1997}.

\textit{(2) Bounded smoothness.} The global supremum of $h''(w)$ over $[0,1]$ is attained at the upper bound $w=1$:
\[
L_F = \sup_{w \in[0,1]} h''(w) = [1+(1-q)\cdot 1]^{\frac{q}{1-q}} = (2-q)^{\frac{q}{1-q}}.
\]
For any $q\in(0,1)$, this quantity remains strictly finite. For example, at our experimental setting of $q=0.25$, $L_F = 1.75^{0.333} \approx 1.205.$

\textit{(3) Condition number bound.} For the condition number we have $\kappa_F = L_F/\mu_F = (2-q)^{\frac{q}{1-q}}$. By substituting $t = 1-q \in (0,1)$, we obtain:
\[
\kappa_F = (1+t)^{\frac{1-t}{t}} = (1+t)^{\frac{1}{t}-1} = \frac{(1+t)^{1/t}}{1+t} \le \frac{e}{1+t} \le e,
\]
where we invoked the standard limit inequality $(1+t)^{1/t} \le e$. This  proves that the Hessian spectrum of DMD is uniformly, globally bounded on the simplex.
\end{proof}

\subsubsection{Generalized Exponentiated Gradient (Concave Link)}

%\begin{lemma}[Curvature of GEG]\label{lem:GEG_curve}
%The local curvature is given by
%\begin{equation}
%    h''(w) = w^{-q} = \frac{1}{w^q}.
%\end{equation}
%\end{lemma}
%\begin{proof}
Differentiating the Tsallis $q$-logarithm directly yields the local curvature expression:
\beq \label{eq:54}
h''(w) = \frac{d}{dw}\left(\frac{w^{1-q}-1}{1-q}\right) = \frac{(1-q)w^{-q}}{1-q} = w^{-q}.
\eeq
%\end{proof}
The following theorem states that GEG can be ill-conditioned near the boundary.
\begin{theorem} \label{thm:GEG_illcond}
Over the simplex domain $\Delta_n$, GEG satisfies the following properties.
\begin{enumerate}
    \item \textit{Strong Convexity:} $\mu_F = 1$. %.
    \item \textit{Unbounded Smoothness:} $L_F \to \infty$ as $w \to 0^+$.
    \item \textit{Condition Number:} $\kappa_F \to \infty$.
\end{enumerate}
\end{theorem}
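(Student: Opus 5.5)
The plan is to mirror the proof of Theorem \ref{thm:DMD_wellcond}, using the curvature formula \eqref{eq:54}, $h''(w)=w^{-q}$, in place of Lemma \ref{lem:DMD_curve}. All three quantities in the definition \eqref{eq:condition_number} of $\kappa_F$ depend only on the scalar function $h''$ on $(0,1]$, because the Hessian of the separable potential is diagonal with entries $h''(w_i)$. So the whole argument reduces to a monotonicity study of $w\mapsto w^{-q}$ for fixed $q\in(0,1)$.

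\emph{Step 1 (strong convexity).} Since $\frac{d}{dw}w^{-q}=-q\,w^{-q-1}<0$ on $(0,1]$, the curvature is strictly decreasing. Its infimum over $(0,1]$ is therefore attained at the right endpoint $w=1$, giving $\mu_F=1^{-q}=1$. I will remark that the endpoint $w=1$ is reached on the simplex at a vertex $\bw=e_i$. Moreover, every coordinate satisfies $w_i\le 1$, hence $h''(w_i)\ge 1$ for every feasible point, which is exactly $1$-strong convexity in the diagonal-Hessian sense.

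\emph{Step 2 (unbounded smoothness).} By the same monotonicity, $\sup_{w\in(0,1]}w^{-q}=\lim_{w\to0^+}w^{-q}=+\infty$ because $q>0$. To connect this with the simplex, I will exhibit points of $\Delta_n$ with a coordinate tending to $0$, for instance $\bw_\varepsilon=(1-\varepsilon,\varepsilon,0,\dots)$ with $\varepsilon\to0^+$. Along this family the Hessian entry $\varepsilon^{-q}$ diverges. Hence no finite Lipschitz constant $L_F$ for $\nabla F$ exists on the (relative) interior, i.e. $L_F\to\infty$ as $w\to0^+$.

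\emph{Step 3 (condition number).} Combining the two steps gives $\kappa_F=L_F/\mu_F=\sup h''/1=\infty$. Equivalently, restricting to $[\varepsilon,1]$ gives $\kappa_F(\varepsilon)=\varepsilon^{-q}\to\infty$.

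The only delicate point is interpretive rather than computational. At boundary points with $w_i=0$, $h''$ is undefined, and $\log_q$ is finite at $0$ for $q<1$ while its derivative blows up. So I would phrase (2) and (3) as statements about the supremum over the relative interior of $\Delta_n$, or as limits along $[\varepsilon,1]$. This is consistent with the paper's definition \eqref{eq:condition_number}, which takes the sup over $(0,1]$.
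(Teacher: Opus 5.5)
Your proposal is correct and follows the paper's own proof essentially line by line: the monotone decrease of $h''(w)=w^{-q}$ on $(0,1]$ gives $\mu_F=1$ at $w=1$, the limit $w\to 0^+$ gives $L_F=+\infty$, and hence $\kappa_F=+\infty$. Your additions (exhibiting simplex points with a vanishing coordinate, and the truncated reading $\kappa_F(\varepsilon)=\varepsilon^{-q}$, which matches Theorem~\ref{thm:GEG_conv}) are harmless refinements; note only that for $N\ge 3$ a family genuinely in the relative interior would be $(1-(N-1)\varepsilon,\varepsilon,\dots,\varepsilon)$ rather than one padded with zeros.
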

\begin{proof}
(1) \textit{Strong convexity.} The function $w^{-q}$ is strictly decreasing for $q>0$. On the interval $(0,1]$, its absolute infimum is located at $w=1$, yielding $\mu_F = 1^{-q} = 1$.

(2) \textit{Unbounded smoothness.} As the coordinate $w\to 0^+$, the term $w^{-q}\to +\infty$ for any $q>0$. Therefore,
\[
L_F = \sup_{w \in(0,1]} w^{-q} = \lim_{w \to 0^+} w^{-q} = +\infty.
\]
(3) \textit{Condition number.} Given $L_F=+\infty$ and $\mu_F=1$, it trivially follows that $\kappa_F = +\infty$.
\end{proof}
According to property $(2)$ of Theorem \ref{thm:GEG_illcond},   the potential function $F_{\text{GEG}}$ is not Lipschitz smooth over the closed simplex. In the terminology of \cite{BauschkeBorweinCombettes2001}, $F_{\text{GEG}}$ is \emph{essentially smooth} (its gradient diverges at the boundary), but not \emph{globally smooth}.

\begin{remark} 
While GEG satisfies the lower-bound requirement for strong convexity, it completely fails the upper-bound requirement for smoothness. The curvature essentially "explodes" near zero. This singularity is the fundamental theoretical source of instability in sparse geometries: as weights are driven toward zero, the gradient of the potential function changes infinitely fast, triggering massive numerical oscillations unless the discrete step size is drastically reduced.
\end{remark}
%======================================================================
\subsection{Step Size Scaling Laws and Stability}
\label{sec:scaling}
%======================================================================

For any explicit (forward-Euler) discretization of the mirror-descent master equation to remain non-divergent, the learning rate $\eta$ must be sufficiently small to respect the local geometry induced by the potential function $F$. A common curvature-based stability guideline for explicit updates is to choose $\eta$ on the order of $1/L_F$ (up to universal constants) \cite{Nemirowsky,Beck2003}.

\begin{proposition}%[Step-size scaling for DMD (curvature-based guideline)] 
\label{prop:DMD_stepsize}
DMD possesses globally bounded smoothness $L_F \le e$. The maximally stable step size is strictly bounded by 
\begin{equation}
    \eta_{\mathrm{DMD}} \le \frac{2}{(2-q)^{\frac{q}{1-q}}} \approx \text{C}
    \end{equation}
where $C$ is a dimension-free constant.

\end{proposition}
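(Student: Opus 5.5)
The proposition follows from Theorem \ref{thm:DMD_wellcond} together with the standard curvature-based stability criterion for explicit (forward-Euler) discretizations. I will argue in three steps: a global smoothness bound, a local linearization of the DMD step, and a check that the resulting constant is dimension-free.

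\textbf{Global smoothness bound.} By Lemma \ref{lem:DMD_curve}, $h''(w)=[1+(1-q)w]^{q/(1-q)}$, and this is increasing on $[0,1]$. Part (2) of Theorem \ref{thm:DMD_wellcond} gives $L_F=(2-q)^{q/(1-q)}$. Part (3), using $\mu_F=1$ and the substitution $t=1-q$ with $(1+t)^{1/t}\le e$, gives $L_F=\kappa_F\le e/(2-q)\le e$. This proves the first claim, $L_F\le e$.

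\textbf{Linearizing the DMD step.} I will analyze the dual branch of \eqref{eq:43} near a fixed point. Write $z=\exp_q(w_t)-\eta g$, where $g$ is the centred gradient component. A first-order expansion of the inverse link gives
\[
w_{t+1}-w_t\approx -\eta\,\frac{g}{h''(w_t)}.
\]
This is the natural-gradient (MMD) form \eqref{diagDMMD} with the diagonal preconditioner $[\nabla^2F]^{-1}$. Since $1\le h''\le L_F$ on the simplex, the preconditioner has eigenvalues in $[1/L_F,1]$.

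I then compare the step with the Euclidean geometry in which smoothness is measured. A classical result says that $w\mapsto w-\eta M\nabla L(w)$ is non-expansive, and strictly contracting along curvature directions, when $\eta\,\lambda_{\max}\bigl(M^{1/2}\nabla^2L\,M^{1/2}\bigr)<2$ (see \cite{Nemirowsky,Beck2003}). In the relative-smoothness normalization of \cite{LuFreundNesterov2018}, the loss curvature is measured against $\nabla^2F$. The stability condition therefore becomes $\eta\cdot L_F\le 2$ once the loss Hessian is normalized, as in the experiments where $\|\bQ\|_2=1$. This yields
\[
\eta_{\mathrm{DMD}}\le \frac{2}{(2-q)^{q/(1-q)}}.
\]

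\textbf{Dimension-freeness.} Every quantity in the bound depends only on the scalar $h''$ evaluated on $[0,1]$, because $F$ is separable and the Hessian is diagonal. Hence $N$ never enters, and the bound lies in $[2/e,2]$ uniformly in $q\in(0,1)$, which gives the constant $C$.

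\textbf{Main obstacle.} The delicate step is making the word ``maximally stable'' rigorous. This requires fixing which curvature the factor $2/L$ refers to, whether that of the loss or of the potential. It also requires justifying that the clipping $[\cdot]_+$, the GEG fallback branch and the simplex normalization do not worsen the constant.

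I would first treat the interior quadratic case, where the linearization is exact to first order. There the eigenvalue condition $|1-\eta\lambda|<1$ gives the factor $2$ directly. I would then argue that clipping and normalization are non-expansive projections onto a convex set, so they preserve stability. If a fully rigorous relative-smoothness constant turns out to be unavailable, I would state the bound as the curvature-based guideline, consistent with the paper's framing.
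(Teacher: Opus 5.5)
Your Steps 1 and 3 reproduce the paper's entire proof. The paper takes $L_F=(2-q)^{q/(1-q)}\le e$ from Theorem~\ref{thm:DMD_wellcond}, inserts it into the curvature-based guideline $\eta\lesssim 2/L_F$, and notes that the result does not depend on $\bw_t$ or $N$. Your stated fallback, presenting the bound as a guideline, is therefore exactly what the paper does.

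The gap is Step 2. You offer it as a derivation of that guideline, but carried out with the numbers you already wrote down, it produces a different constant. Your linearization $w_{t+1}-w_t\approx-\eta g/h''(w_t)$ is correct. So in the interior quadratic model the linearized map is $\bw\mapsto\bw-\eta M(\bQ\bw+\mathbf c)$, with $M=\mathrm{diag}(1/h''(w_i))$ and $1/L_F\le M_{ii}\le 1/\mu_F=1$. The effective step is largest where the potential is flattest. Hence the criterion $\eta\,\lambda_{\max}(M^{1/2}\bQ M^{1/2})<2$ is controlled by $\mu_F$, not by $L_F$. Concretely, $\bQ/L_F\preceq\bQ^{1/2}M\bQ^{1/2}\preceq\bQ/\mu_F$ puts the critical step in $[2\mu_F/\|\bQ\|_2,\,2L_F/\|\bQ\|_2]=[2,2L_F]$.

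The relative-smoothness view of Lu--Freund--Nesterov gives the same picture. There $\bQ\preceq(\|\bQ\|_2/\mu_F)\nabla^2F$, so the relative smoothness constant is $\|\bQ\|_2/\mu_F$. $L_F$ enters only by lowering the relative strong-convexity constant to $\lambda_{\min}(\bQ)/L_F$, which affects the rate, not the admissible step.

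So ``the stability condition therefore becomes $\eta L_F\le 2$'' is a non sequitur. Your own interior analysis shows every $\eta<2$ is linearly stable, which makes $2/L_F$ a conservative sufficient step, not the maximal one. This route cannot prove the ``maximally stable'' wording; it actually undercuts it.

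Your plan for the boundary effects also fails as stated. Clipping is a projection onto the orthant, but $\bw\mapsto\bw/\|\bw\|_1$ is neither the Euclidean projection onto the simplex nor non-expansive. At $\bw=\mathbf e_1$ its Jacobian $I-\bw\mathbf 1^T$ has spectral norm $\sqrt N$.

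What can honestly be stated is the heuristic version the paper gives. Step 2 should be dropped rather than presented as a justification of it.
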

\begin{proof}
According to Theorem~\ref{thm:DMD_wellcond}, the global smoothness constant is strictly bounded by $L_F = (2-q)^{q/(1-q)} \le e$. Substituting this into the curvature-based guideline $\eta = O(1/L_F)$ yields an upper bound $C$ that is  independent of the current iteration vector $\bw_t$. 
\end{proof}
Notably, for typical choices $q \in (0, 0.5]$, unit step sizes ($\eta\approx 1$) are often stable with respect to the mirror geometry in our experiments; practical stability still depends on properties of the objective.
For the GEG  case, we have the following result, consequence of its boundary singularity and unbounded smoothness.
\begin{proposition}%[Step-size scaling for GEG (curvature-based guideline)] 
\label{prop:GEG_stepsize}
The maximally stable step size for GEG is dynamically constrained by the smallest active weight:
\begin{equation}
    \eta_{\mathrm{GEG}} \le 2 \, (\bw_{\min,t})^q,
\end{equation}
where $\bw_{\min,t} = \min_i w_{i,t}$.
\end{proposition}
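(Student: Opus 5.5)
The plan is to mirror the argument used for Proposition~\ref{prop:DMD_stepsize}, replacing the global smoothness constant by a \emph{local} one. Theorem~\ref{thm:GEG_illcond} shows that a global bound $L_F<\infty$ does not exist for the GEG potential. So instead of the global guideline $\eta=O(1/L_F)$, I would use a local version valid along the trajectory. At iteration $t$ the relevant curvature is the largest diagonal entry of the Hessian $\nabla^2F(\bw_t)$, and by \eqref{eq:54} this is $\max_i h''(w_{i,t})=\max_i w_{i,t}^{-q}$.

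The key steps, in order, are as follows.

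\textbf{Step 1.} Since $w\mapsto w^{-q}$ is strictly decreasing for $q>0$ (as in part (1) of Theorem~\ref{thm:GEG_illcond}), the maximum is attained at the smallest coordinate. This gives the local smoothness constant $L_{F,t}=(\bw_{\min,t})^{-q}$.

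\textbf{Step 2.} Justify the factor $2$ by the standard stability threshold for an explicit forward-Euler step on a locally quadratic model, namely $\eta\le 2/L$. I would make this concrete by linearizing the GEG update componentwise. Write the update as $\log_q(w_{i,t+1})=\log_q(w_{i,t})-\eta g_i$ and expand $\exp_q$ to first order around $\log_q(w_{i,t})$. This gives $w_{i,t+1}\approx w_{i,t}-\eta\,(h''(w_{i,t}))^{-1}g_i=w_{i,t}-\eta\, w_{i,t}^{q}g_i$, which is exactly the MMD form \eqref{eq:33}. The effective per-coordinate step is therefore $\eta\, w_{i,t}^{q}$, measured in the metric $h''$. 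Requiring the non-oscillation (contraction) condition $|1-\eta/h''(w_{i,t})\cdot(\text{normalized curvature})|\le 1$ on each coordinate, with the normalization inherited from Proposition~\ref{prop:DMD_stepsize}, gives $\eta\le 2/h''(w_{i,t})=2w_{i,t}^{q}$ for every $i$.

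\textbf{Step 3.} Take the minimum over $i$. This yields $\eta_{\mathrm{GEG}}\le 2\min_i w_{i,t}^{q}=2(\bw_{\min,t})^q$, using monotonicity of $w\mapsto w^q$ once more.

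I expect the main obstacle to be Step 2. The statement is a curvature-based guideline rather than a sharp theorem, so the argument has to be presented at the same level of rigor as Proposition~\ref{prop:DMD_stepsize}: the constant $2$ comes from the classical $2/L$ stability bound, and that bound presupposes the loss curvature is normalized, matching the paper's Lipschitz-normalized setting with $\|\bQ\|_2=1$. I would state this normalization explicitly and then conclude by noting the consequence: as $\bw_{\min,t}\to 0^+$ the admissible step shrinks to zero. This is consistent with the unbounded smoothness in Theorem~\ref{thm:GEG_illcond}(2), and it contrasts with the dimension-free bound for DMD.
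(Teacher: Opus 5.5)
Your Steps 1 and 3 are the paper's whole proof. The paper computes the local curvature of the potential, $L_F(\bw_t)=\max_i h''(w_{i,t})=(\bw_{\min,t})^{-q}$, and substitutes it into the curvature-based rule $\eta\le 2/L_F(\bw_t)$, the same rule used for Proposition~\ref{prop:DMD_stepsize}. It does not derive the factor $2$.

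The gap is in your Step 2, the only place you go beyond that rule: the computation you describe does not give the claimed inequality. Your linearization is correct, since $\exp_q'(\log_q w)=w^{q}$ gives $w_{i,t+1}\approx w_{i,t}-\eta\,w_{i,t}^{q}g_i$. But if $c>0$ is the normalized loss curvature along coordinate $i$, the contraction condition is $|1-\eta c\,w_{i,t}^{q}|=|1-\eta c/h''(w_{i,t})|\le 1$. This solves to $\eta\le 2h''(w_{i,t})/c=2w_{i,t}^{-q}/c$, not $\eta\le 2/h''(w_{i,t})$; you inverted the curvature factor. With the correct inequality, the binding coordinate in Step 3 is the \emph{largest} weight, giving $\eta\le 2(\max_i w_{i,t})^{-q}/c$. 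That right-hand side is at least $2/c$ and grows, rather than shrinks, as $\bw_{\min,t}\to 0^+$.

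Redoing the algebra will not fix this, because it reflects the structure of the update: the inverse Hessian $[\nabla^2F(\bw_t)]^{-1}=\mathrm{diag}(\bw_t^{q})$ damps the motion of small coordinates. Without the separable simplification, the dual iterate $\boldsymbol{\theta}_t=\log_q(\bw_t)$ evolves by $\boldsymbol{\theta}_{t+1}=\boldsymbol{\theta}_t-\eta\nabla L(\bw_t)$. Linearized about a stationary point, this map has Jacobian $I-\eta\,\nabla^2L\,\mathrm{diag}(\bw_t^{q})$. Its eigenvalues are $1-\eta\lambda$, with $\lambda$ ranging over the spectrum of $\mathrm{diag}(\bw_t^{q/2})\,\nabla^2L\,\mathrm{diag}(\bw_t^{q/2})\preceq\lambda_{\max}(\nabla^2L)\,\mathrm{diag}(\bw_t^{q})\preceq\lambda_{\max}(\nabla^2L)\,I$. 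So $\eta\le 2/\lambda_{\max}(\nabla^2L)$ already guarantees linear stability, independently of $\bw_{\min,t}$.

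Relative smoothness says the same. On the simplex, $\nabla^2F\succeq I$ (the $\mu_F=1$ of Theorem~\ref{thm:GEG_illcond}), so $\nabla^2L\preceq\lambda_{\max}(\nabla^2L)\,\nabla^2F$, and the step $\eta=1/\lambda_{\max}(\nabla^2L)$ is admissible whatever the smallest weight. Hence $\eta\le 2(\bw_{\min,t})^q$ cannot come from a stability analysis of the GEG iteration. In the paper it rests only on applying the heuristic $\eta\le 2/L_F$ with $L_F$ the smoothness of the mirror potential itself. To reproduce the paper's argument, drop the linearization and state that rule as an assumed guideline. As written, your Step 2 undercuts the statement rather than supporting it.
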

\begin{proof}
For the GEG link function, the local smoothness constant at any arbitrary iteration $\bw_t$ is bounded from below by the smallest coordinate, yielding $L_F(\bw_t) = (\bw_{\min,t})^{-q}$. The maximum allowable step size to avoid immediate geometric divergence is therefore $\eta \le 2/L_F(\bw_t) = 2\,(\bw_{\min,t})^q$.
\end{proof}

This result has a direct consequence. As the GEG algorithm successfully approaches a sparse solution ($w_{\min} \to 0$), its permissible step size shrinks to zero. Attempting to force a constant learning rate will violate the local stability threshold, resulting in chaotic divergence.

\subsection{Convergence Rates and the Necessity of Multi-Parameter Models}
\label{sec:convergence}
%======================================================================

We use standard Mirror Descent Lyapunov analysis for a convex objective $L(\bw)$ with bounded dual gradients $\|\bg_t\|_* \le G$ \cite{Beck2003,Nemirowsky}.

\begin{theorem}[Convergence of DMD] \label{thm:DMD_conv}
With an adaptive step size $\eta \propto 1/\sqrt{T}$, the DMD algorithm achieves the optimal sublinear convergence rate:
\begin{equation}
    L(\bar{\bw}_T) - L(\bw^*) \le \mathcal{O}\left(\frac{1}{\sqrt{T}}\right).
\end{equation}
\end{theorem}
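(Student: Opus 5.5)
The plan is to run the standard Bregman/Lyapunov analysis of mirror descent, with the DMD potential $F(\bw)=\sum_i h(w_i)$, $h'=\exp_q$, as the distance-generating function. The only DMD-specific input is Theorem~\ref{thm:DMD_wellcond}. It gives $\mu_F=1$, so $F$ is $1$-strongly convex with respect to $\|\cdot\|_2$ on $\Delta_n$, and it gives $L_F=(2-q)^{q/(1-q)}\le e$, so the Bregman diameter of the simplex is finite. We work with the idealized update $\bof(\tilde\bw_{t+1})=\bof(\bw_t)-\eta\bg_t$, followed by the Bregman projection onto $\Delta_n$. Here $\bg_t=\nabla L(\bw_t)$ (centred or not; centring only shifts $\bg_t$ by a multiple of $\mathbf 1$, which is irrelevant on the simplex), and $\|\bg_t\|_2\le G$.

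\textbf{Step 1 (three-point identity).} For any $\bu\in\Delta_n$, the Bregman three-point identity combined with the generalized Pythagorean inequality for Bregman projections \cite{BauschkeBorwein1997,BauschkeBorweinCombettes2001} gives
\[
\eta\,\bg_t^T(\bw_t-\bu)\le D_f(\bu\|\bw_t)-D_f(\bu\|\bw_{t+1})+D_f(\bw_t\|\tilde\bw_{t+1}).
\]

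\textbf{Step 2 (bounding the residual term).} By the duality between strong convexity of $F$ and smoothness of $F^*$ (modulus $1/\mu_F=1$), we have $D_f(\bw_t\|\tilde\bw_{t+1})=D_{F^*}(\bof(\tilde\bw_{t+1})\|\bof(\bw_t))\le \tfrac{\eta^2}{2}\|\bg_t\|_2^2\le \tfrac{\eta^2G^2}{2}$.

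\textbf{Step 3 (telescoping).} Sum over $t=0,\dots,T-1$, use convexity $L(\bw_t)-L(\bw^*)\le \bg_t^T(\bw_t-\bw^*)$, and apply Jensen to the average $\bar\bw_T$. This yields
\[
L(\bar\bw_T)-L(\bw^*)\le \frac{D_f(\bw^*\|\bw_0)}{\eta T}+\frac{\eta G^2}{2}.
\]

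\textbf{Step 4 (finite Bregman radius).} Since $h''\le L_F$, we get $D_f(\bw^*\|\bw_0)\le \tfrac{L_F}{2}\|\bw^*-\bw_0\|_2^2\le L_F\le e$, using $\|\bu-\bv\|_2^2\le 2$ on the simplex. Choosing $\eta=\sqrt{2e}/(G\sqrt T)$ gives the bound $G\sqrt{2e/T}=\mathcal O(1/\sqrt T)$, with a constant independent of $n$ and $q$.

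\textbf{Main obstacle.} The hard part is justifying that the practical DMD iteration~\eqref{eq:43} coincides with, or is controlled by, the idealized mirror step plus projection. DMD clips through $[\cdot]_+$, has a branch switch to GEG, and renormalizes in $\ell_1$ rather than taking a Bregman projection. Moreover, the domain of $\exp_q$ as a link requires care, since the inverse $\log_q$ is only defined on positive arguments.

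\begin{itemize}
\item First I would argue that for $\eta$ small (here $\eta\to0$ as $T$ grows) the switching branch is inactive after finitely many steps, as in the Remark following~\eqref{eq:43}. Those finitely many steps then contribute only an $\mathcal O(1/T)$ term.
\item Then I would treat the clipping as the Bregman projection onto the nonnegative orthant for this Legendre-type-but-not-essentially-smooth $F$. This is exactly where $\mu_F=1$ up to the boundary $w=0$ matters.
\item Finally, I would replace $\ell_1$ normalization by the exact Bregman projection onto $\Delta_n$, which is a one-dimensional shift $\bof(\bw)-\lambda\mathbf 1$, and invoke Pythagoras.
\end{itemize}

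If matching the heuristic normalization exactly is too delicate, I would state the theorem for the projected DMD variant and note that the $\ell_1$ rescaling is an approximation of it.
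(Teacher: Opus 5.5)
Your proposal follows the paper's proof. Both apply the standard mirror-descent bound $D_f(\bw^*\|\bw_0)/(\eta T)+\eta G^2/(2\mu_F)$ of \cite{Beck2003} with $\mu_F=1$, bound the Bregman diameter by $L_F\le e$ via $\|\bu-\bw\|_2^2\le 2$ (Theorem~\ref{thm:DMD_wellcond}), and take $\eta\propto 1/\sqrt{T}$.

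The paper silently treats DMD as idealized mirror descent, so the mismatch you flag is a real caveat that the paper's proof shares. Two points help close it. First, run Step~2 in the one-step prox form $\bw_{t+1}=\arg\min_{\bu\in\Delta_n}\{\eta\bg_t^T\bu+D_f(\bu\|\bw_t)\}$: it needs $h''\ge 1$ only on $\Delta_n$, whereas the negative coordinates of your $\tilde{\bw}_{t+1}$ see $h''<1$, and it produces the clipping $[\cdot]_+$ automatically. Second, the GEG branch never fires once $\eta G<1$, because $\exp_q\ge 1$ on $[0,\infty)$.
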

\begin{proof}
The generalized MD descent lemma \cite{Beck2003} establishes the bound
$$L(\bar{\bw}_T) - L(\bw^*) \leq \frac{D_F(\bw^*\|\bw_1)}{\eta T} + \frac{\eta G^2}{2\mu_F}.$$ 
According to Theorem~\ref{thm:DMD_wellcond}, for DMD, $F$ is $L_F$-smooth on $\Delta_n$ , hence
$D_F(\bu\|\bw)\le \tfrac{L_F}{2}\|\bu-\bw\|_2^2$ for all $\bu,\bw\in\Delta_n$. Since $\|\bu-\bw\|_2^2\le 2$ on the simplex, we get $D_F(\bu\|\bw)\le L_F$. Optimizing by setting $\eta = \sqrt{2L_F/(G^2 T)}$ yields the dimension-free $\mathcal{O}(1/\sqrt{T})$ convergence rate. Furthermore, under additional assumptions such as relative strong convexity \cite{LuFreundNesterov2018}, the bounded condition number $\kappa_F$ supports linear (exponential) convergence.
\end{proof}

\begin{theorem}[GEG conditioning on a truncated simplex] \label{thm:GEG_conv}
Let us consider the truncated simplex
$\Delta_n^{\delta}:=\{\bw\in\Delta_n:\min_i w_i\ge \delta\}$, for a fixed $\delta\in(0,1)$. For the GEG
potential function induced by $h'(w)=\log_q(w)$ with $q\in(0,1)$, the curvature satisfies
\(
1 \le h''(w)=w^{-q} \le \delta^{-q}
\) for all $w\in[\delta,1]$. Consequently, on $\Delta_n^{\delta}$ we have
strong convexity $\mu_F=1$, smoothness $L_F=\delta^{-q}$, and condition number
$\kappa_F(\delta)=\delta^{-q}$.
\end{theorem}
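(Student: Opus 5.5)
The argument reuses the curvature formula \eqref{eq:54}, $h''(w)=w^{-q}$, from the GEG discussion, together with the definition \eqref{eq:condition_number} of $\mu_F$, $L_F$ and $\kappa_F$, now restricted to the truncated domain. The domain of the scalar curvature is fixed first. Every $\bw\in\Delta_n^{\delta}$ has coordinates with $w_i\ge\delta$, and since $\sum_i w_i=1$ with $w_i\ge 0$, also $w_i\le 1$. So every coordinate lies in $[\delta,1]$. Conversely, for $N\ge 2$ and $\delta\le 1/N$, the endpoints $w=\delta$ and $w=1-(N-1)\delta$ are realized by feasible points. I will remark that $\delta$ must be compatible with $N$, since $\Delta_n^\delta\neq\emptyset$ requires $N\delta\le 1$. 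I will also note that, as in the paper's own convention, the sup and inf are taken over the scalar interval $[\delta,1]$.

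Next comes the curvature sandwich. Because $q\in(0,1)$, the map $w\mapsto w^{-q}$ is continuous and strictly decreasing on $(0,\infty)$. On $[\delta,1]$ it therefore attains its maximum $\delta^{-q}$ at $w=\delta$ and its minimum $1^{-q}=1$ at $w=1$, which gives $1\le h''(w)\le \delta^{-q}$. Since $\nabla^2F(\bw)=\mathrm{diag}(h''(w_1),\dots,h''(w_N))$, every eigenvalue of the Hessian lies in $[1,\delta^{-q}]$. I will then invoke the standard equivalence, for twice-differentiable convex functions on a convex set, between Hessian eigenvalue bounds and strong convexity or smoothness. For this I need $\Delta_n^{\delta}$ to be convex, which holds because it is the intersection of the simplex with the half-spaces $w_i\ge\delta$. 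This yields $\mu_F=1$, $L_F=\delta^{-q}$, and hence $\kappa_F(\delta)=L_F/\mu_F=\delta^{-q}$.

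The main obstacle is the sharpness of the constants rather than the bounds themselves. The value $\mu_F=1$ is attained only at $w=1$, and $L_F=\delta^{-q}$ only at $w=\delta$, but inside $\Delta_n^{\delta}$ the coordinates range only over $[\delta,1-(N-1)\delta]$. Taken literally over the truncated simplex, the exact infimum would therefore be $(1-(N-1)\delta)^{-q}$. I would handle this as the paper does in Theorems~\ref{thm:DMD_wellcond} and \ref{thm:GEG_illcond}. The constants are defined through \eqref{eq:condition_number} as extrema of the scalar curvature over the coordinate range, here $[\delta,1]$. This makes the stated values exact, while on the genuine truncated simplex they are valid (possibly non-tight) bounds, with $\kappa_F\le\delta^{-q}$. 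Finally, I will observe that $\kappa_F(\delta)\to\infty$ as $\delta\to 0^+$, consistent with Theorem~\ref{thm:GEG_illcond}.
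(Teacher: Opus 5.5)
Your proposal is correct and is essentially the paper's argument: $h''(w)=w^{-q}$ is decreasing, so its extrema on $[\delta,1]$ are $\delta^{-q}$ and $1$, and the diagonal Hessian transfers these to $L_F$, $\mu_F$ and $\kappa_F=\delta^{-q}$. Your additional observation is also right, and the paper's proof passes over it: since coordinates in $\Delta_n^{\delta}$ (for $N\ge 2$) never exceed $1-(N-1)\delta$, the value $\mu_F=1$, and hence $\kappa_F=\delta^{-q}$, is exact only under the scalar-interval convention of \eqref{eq:condition_number}; on the genuine truncated simplex it is a valid but non-tight bound, with the true infimum equal to $(1-(N-1)\delta)^{-q}$.
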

\begin{proof}
By eq. \eqref{eq:54}, $h''(w)=w^{-q}$. On $w\in[\delta,1]$, this is bounded below by
$1^{-q}=1$ and above by $\delta^{-q}$ since $w^{-q}$ is decreasing for $q>0$.
Since $\nabla^2F(\bw)$ is diagonal with entries $h''(w_i)$, the stated bounds imply
$\mu_F=\inf_{w\in[\delta,1]} h''(w)=1$ and $L_F=\sup_{w\in[\delta,1]} h''(w)=\delta^{-q}$,
and hence $\kappa_F(\delta)=L_F/\mu_F=\delta^{-q}$.
\end{proof}

The bound $\kappa_F(\delta)=\delta^{-q}$ makes explicit the sensitivity of the GEG geometry near sparse regimes: as iterates approach the boundary (so that an effective $\delta$ becomes small), the curvature scale grows and curvature-based step-size choices become increasingly conservative.

\begin{remark}%[The Bridge to Multi-Parameter Link Functions]
The mathematical limitations exposed above make clear the potential advantages of using multi-parameter link functions. A single parameter $q$ intrinsically couples the boundary sparsity behaviour to the interior curvature. By introducing two or more hyperparameters (e.g., the Euler logarithm or a stretched-exponential logarithm), practitioners can theoretically decouple these effects. These multi-parameter link functions allow us to synthesize a potential function that retains the bounded smoothness of DMD on the interior (enabling large, stable step sizes) while still exhibiting stronger sparsity-inducing behavior near the boundary.
\end{remark}

\subsection{Summary of Theoretical Comparison}
\label{sec:summary}
%======================================================================

\begin{table}[h]
\centering
\caption{Theoretical comparison of the geometric properties of DMD versus GEG over the probability simplex.}
\label{tab:comparison}
\renewcommand{\arraystretch}{1.2}
\begin{tabular}{@{}lll@{}}
\toprule
\textbf{Geometric Property} & \textbf{DMD (Convex Link)} & \textbf{GEG (Concave Link)} \\ \midrule
Curvature Profile $h''(w)$ & Monotonically Increasing & Singular Decay ($w^{-q}$) \\
Global Smoothness $L_F$ & Strictly Finite ($\le e$) & Infinite ($\to \infty$) \\
Curvature-based step-size guideline & $\eta \le \text{Constant}$ & $\eta \le 2 \,(\bw_{\min})^q$ \\
Condition Number $\kappa_F$ & $\le e$ (Optimally Bounded) & $\kappa \to \infty$ (Unbounded) \\
Robustness to Stochastic Noise & High (Maintains Large $\eta$) & Low (Amplified by Curvature) \\
\bottomrule
\end{tabular}
\end{table}

Our analysis demonstrates that DMD enjoys a uniformly bounded condition number on the simplex, which is favorable for numerical conditioning and helps prevent noise amplification in explicit updates. Under additional assumptions (e.g., relative strong convexity), this also supports linear convergence. In contrast, GEG exhibits a boundary curvature singularity, which can make explicit updates increasingly sensitive near sparse regimes and in practice requires more conservative step sizes.

\section{Conclusions and Future Perspectives } \label{Sec7}
%\tcb{(Revised section)}

In this work, we have developed an unified class of generalized  Mirror Descent algorithms and/or generalize multiplicative Exponentiated Gradient algorithms based on the theory of group entropies and formal groups.

In this context, we have introduced the notion of Mirror Duality, which is central in our investigation. It enables us to interchange group-theoretical link functions with their inverses, allowing us to generate the novel class of Dual Mirror Descent algorithms.

This framework generalizes traditional exponentiated gradient methods by exploiting an algorithmic toolkit built from a large family of multi-parametric generalized logarithms and exponentials. The resulting Dual MD updates are inherently flexible, robust, and especially well-suited for machine learning scenarios involving positivity and sparsity constraints, such as the optimization of deep neural networks for classification, clustering, and online portfolio selection. By carefully selecting group-theoretical link functions and tuning their hyperparameters, we  may gain the ability to match the geometry and noise characteristics of any given data set, potentially yielding superior convergence and resilience against outliers in real-world applications.

The utility of generalized group entropies extends significantly beyond simple mirror descent. Their axiomatic composability and rigorous group-theoretic structure pave the way for new forms of information geometry, natural gradient optimization, and decentralized federated learning. Furthermore, the multi-hyperparameter nature of these entropies perfectly aligns with the demands of advanced deep learning tasks where adaptability, precision, and tunability are paramount. Thus, we plan to make extensive use of these customized multi-paramters link functions in our future theoretical research and computational simulations.

We highlight the following research directions for applying group entropies and Dual Mirror Descent in artificial intelligence:
\begin{enumerate}
\item \textit{Regularization and Robustness}: Multi-parametric group entropies can be deployed as highly adaptive proximal regularizers to dynamically control sparsity, generalization robustness, and state-adaptivity in deep learning models, particularly within the unpredictable environments of deep reinforcement learning.
\item \textit{Information Geometry}: Group entropies organically induce new classes of statistical divergences and Riemannian geometries. These can be exploited to construct very efficient natural gradient methods that transcend the limitations of the traditional Kullback-Leibler formulation.
\item \textit{Generalized Loss Functions}: Group entropies automatically provide a rich family of adaptable loss functions. These can be custom-tailored to produce highly robust learning algorithms for complex classification, regression, and generative modeling, especially in domains plagued by extreme heavy-tailed noise or adversarial outliers.
\end{enumerate}

%Looking forward, the mathematical and algorithmic generality of group entropies offers a powerful foundation for innovation in machine learning, especially  reinforcement learning, promising new loss functions, optimizers, and regularizers that can be custom-built for the most demanding machine learning environments.

Looking forward, the mathematical and algorithmic generality of group entropies provides a deep foundation for innovation in machine learning. It promises the systematic development of new, adaptable loss functions, optimizers, and regularizers that can be custom-built for the most demanding machine learning environments.

%\appendix \label{ap:A}

% Acknowledgements and Disclosure of Funding should go at the end, before appendices and references

\section*{Acknowledgment}
The research of P. T. has also been supported by the Project PID2024-156610NB-I00 of Ministerio de Ciencias, Innovaci\'on y Universidades, and by  the Severo Ochoa Programme for Centres of Excellence in R\&D
(CEX2019-000904-S), Ministerio de Ciencia, Innovaci\'{o}n y Universidades y Agencia Estatal de Investigaci\'on, Spain.  P.T. is a member of the Gruppo Nazionale di Fisica Matematica (GNFM) of the Istituto Nazionale di Alta Matematica (INdAM).

% Manual newpage inserted to improve layout of sample file - not
% needed in general before appendices/bibliography.

%\newpage

%\bibliographystyle{plainnat}

%\appendix
%\section{}

%\bibliographystyle{amsalpha}
%\bibliographystyle{amsplain}
\bibliographystyle{plain}
\bibliography{references_finale}

\end{document}